%% file: arxiv_2026.tex
\ifdefined\XeTeXversion
\else
  \ifdefined\pdfoutput
    \pdfoutput=1
  \fi
\fi
\documentclass{article}

\PassOptionsToPackage{numbers,sort}{natbib}

\usepackage[preprint]{neurips_2026}

\usepackage[utf8]{inputenc} % allow utf-8 input
\usepackage[T1]{fontenc}    % use 8-bit T1 fonts
\usepackage{hyperref}       % hyperlinks
\usepackage{url}            % simple URL typesetting
\usepackage{booktabs}       % professional-quality tables
\usepackage{amsfonts}       % blackboard math symbols
\usepackage{amsmath}        % align, equation, etc.
\usepackage{amssymb}        % extended math symbols
\newcommand{\mathbbm}[1]{\mathbb{#1}} % arXiv-safe indicator fallback
\usepackage{graphicx}       % \includegraphics
\usepackage{wrapfig}        % text-wrapped figures
\usepackage{nicefrac}       % compact symbols for 1/2, etc.
\usepackage{microtype}      % microtypography
\usepackage{xcolor}         % colors
\usepackage{multirow}       % multi-row table cells
\usepackage{enumitem}       % customizable lists
\usepackage{amsthm}         % theorem environments

\usepackage{microtype}
\usepackage{hyperref}
\usepackage{url}
\usepackage{booktabs}
\usepackage{tcolorbox, xcolor, listings}
\tcbuselibrary{skins, breakable}
\usepackage{amsmath} 
\usepackage{lineno}
\usepackage{multirow}
\usepackage[T1]{fontenc}        % <-- fixes < > rendering in ttfamily
\usepackage{tcolorbox, xcolor}
\usepackage{caption} 
\usepackage{graphicx} 
\usepackage{colortbl}
\usepackage{amsthm}

\newtheorem{theorem}{Theorem}

\newtheorem{proposition}[theorem]{Proposition}

\providecommand{\coloneqq}{\mathrel{:=}}

\newcommand{\agentdebug}[4]{}

\definecolor{gmcolor}{RGB}{88,86,214}

\usepackage{titletoc}
\titlecontents{section}[1.5em]
    {\addvspace{6pt}\bfseries}
    {\contentslabel{1.5em}}
    {\hspace*{-1.5em}}
    {\titlerule*[0.5pc]{.}\contentspage}
\titlecontents{subsection}[3.8em]
    {}
    {\contentslabel{2.3em}}
    {\hspace*{-2.3em}}
    {\titlerule*[0.5pc]{.}\contentspage}
\hypersetup{linktoc=page}

\usepackage[capitalize,noabbrev]{cleveref}
\title{SAPO: Step-Aligned Policy Optimization for Reasoning-Based Generative Recommendation}

\author{%
  Zaiyi Zheng\textsuperscript{1},
  Liang Wu\textsuperscript{2},
  Guanghui Min\textsuperscript{1},
  Yaochen Zhu\textsuperscript{1},
  \\
  \textbf{Liangjie Hong\textsuperscript{2}},
  \textbf{Chen Chen\textsuperscript{1}},
  \textbf{Jundong Li\textsuperscript{1}}\\
  \textsuperscript{1}University of Virginia, \textsuperscript{2}Nokia\\
  \texttt{\{sjc4fq,jjm8vr,uqp4qh,zrh6du,jl6qk\}@virginia.edu}, \\
  \texttt{\{liang.wu,liangjie.hong\}@nokia.com}\\
}

\begin{document}

% #region agent log
\agentdebug{H3}{neurips_2026.tex:79}{document-started}{{"root":"neurips_2026.tex"}}
% #endregion

\maketitle

\input{sections/00_abstract_arxiv}

\input{sections/01_introduction}

\input{sections/02_preliminary}

\input{sections/025_theoretical_analysis}

\input{sections/03_methodology}

% #region agent log
\agentdebug{H3}{neurips_2026.tex:95}{methodology-loaded}{{"section":"03_methodology"}}
% #endregion

\input{sections/04_experiments}

\input{sections/06_conclusion}

% =====================================================================
\begin{ack}
% Omit for anonymous submission; fill in for camera-ready only.
\end{ack}

\bibliographystyle{plainnat}
% #region agent log
\agentdebug{H1}{neurips_2026.tex:124}{bibliography-style-selected}{{"style":"plainnat","bibfile":"main"}}
% #endregion
\bibliography{main}
% #region agent log
\agentdebug{H4}{neurips_2026.tex:118}{bibliography-finished}{{"bibfile":"main"}}
% #endregion

%%%%%%%%%%%%%%%%%%%%%%%%%%%%%%%%%%%%%%%%%%%%%%%%%%%%%%%%%%%%

\newpage
\appendix
\section*{\Large{Appendix}}
\newcommand{\appendixtoc}{
    \begingroup
    \let\oldcontentsline\contentsline
    \renewcommand{\contentsname}{\large\textbf{Appendix}}
    \startcontents[appendices]
    \printcontents[appendices]{}{1}{\setcounter{tocdepth}{2}}
    \endgroup
}
\appendixtoc
\newpage
\section{Notation and Symbols}
\label{app:notation}

We summarize the mathematical notations used throughout the paper in Table~\ref{tab:notation}.
% s
\begin{table}[h]
\centering
\caption{Summary of frequently used notation.}
\label{tab:notation}
\small
\setlength{\tabcolsep}{5pt}
\renewcommand{\arraystretch}{1.08}
\begin{tabular}{@{}llp{0.68\linewidth}@{}}
\toprule
\textbf{Symbol} & \textbf{Domain} & \textbf{Meaning} \\
\midrule
$\mathcal{C}$ & catalog & Item catalog. \\
$v, v^\star, \hat v$ & item & An item, the ground-truth next item, and the predicted item. \\
$x=(v_1,\ldots,v_{T_x})$ & sequence & Chronologically ordered user interaction history used as the sequential-recommendation prompt. \\
$K$ & integer & Number of SID codebook levels; we use $K{=}3$ in all experiments. \\
$\mathcal{V}^{(k)}$ & set & Vocabulary of the $k$-th SID codebook. \\
$\phi(v)$ & mapping & Hierarchical SID tuple assigned to item $v$. \\
$s^{(k)}$ & SID token & SID token at codebook level $k$. \\
$y_i$ & rollout & The $i$-th sampled response for a prompt in a GRPO group. \\
$\tau^{(k)}$ & token span & Reasoning span paired with the level-$k$ SID token. \\
$y_i^{(k)}$ & step unit & SAPO step consisting of $\tau_i^{(k)}$ and $s_i^{(k)}$. \\
$G$ & integer & Number of rollouts sampled per prompt for group-relative normalization. \\
$r(x,y_i)$ & scalar & Outcome reward used by standard GRPO. \\
$r_{i,k}$ & scalar & SAPO reward assigned to rollout $i$ at SID level $k$. \\
$\hat A_i, \hat A_{i,k}$ & scalar & Rollout-level GRPO advantage and step-level SAPO advantage. \\
$w_{i,t}(\theta)$ & ratio & importance ratio for token $t$ in rollout $i$. \\
$\epsilon$ & scalar & clipping radius. \\
$\alpha, \beta$ & scalar & Coefficients for the per-step match reward and final-step format bonus. \\
\bottomrule
\end{tabular}
\end{table}

\input{sections/05_related_work}

\input{sections/07_appendix}
\agentdebug{H3}{neurips_2026.tex:125}{appendix-loaded}{{"section":"07_appendix"}}
% #endregion

%%%%%%%%%%%%%%%%%%%%%%%%%%%%%%%%%%%%%%%%%%%%%%%%%%%%%%%%%%%%

% The NeurIPS checklist is omitted from the arXiv preprint.
% \newpage
% \input{checklist.tex}

\end{document}

%% file: sections/00_abstract_arxiv.tex
\begin{abstract}
Generative recommendation treats next-item prediction as autoregressive item-identifier generation. Specifically, items are encoded as semantic identifiers (SIDs), which are short coarse-to-fine token sequences whose early tokens capture broad semantics and later tokens refine them. Recent work augments this paradigm with reasoning traces and optimizes them via reinforcement learning with verifiable rewards, typically outcome-reward algorithm with exact-match feedback on the generated SID.
However, in large-catalog recommendation, exact-match feedback on the generated SID only reports whether the final item is correct; when a generated SID mismatches, outcome-reward cannot identify which SID-token prediction caused the mismatch and may penalize matched SID-token positions together with the mismatched position.
We identify that the natural unit of credit assignment in this setting is a single reasoning step (one thinking block paired with one SID token). We instantiate this idea in SAPO (Step-Aligned Policy Optimization): rather than broadcasting one advantage to the whole response, SAPO computes a separate group-relative advantage for each reasoning step and applies it only to the corresponding thinking block and SID token.
Across three real-world recommendation datasets, SAPO stabilizes reinforcement-learning training and consistently improves over existing generative recommendation baselines, with the largest gains where sparse exact-match feedback makes reasoning-step credit assignment important.
Our results suggest that reinforcement-learning objectives for structured generation should mirror the decoder's own decomposition of the output.
Our code is available at \url{https://github.com/zhengzaiyi/SAPO}.
% TODO(arXiv): Add the public code repository URL here once it is ready.
\end{abstract}

%% file: sections/01_introduction.tex
% =====================================================================
\section{Introduction}
\label{sec:intro}

% \GM{I see the citation format for NeurIPS is mostly like \url{https://arxiv.org/pdf/2505.10465}, maybe we can adjust accordingly.}
The item recommendation task requires a model to identify items a user is likely to interact with from a large catalog of closely related candidates.
Traditional systems typically decouple candidate retrieval from ranking, whereas generative recommendation offers an alternative by casting recommendation as autoregressive item generation.
Building on advances in language modeling~\citep{minaee2025largelanguagemodelssurvey}, this formulation has become increasingly attractive, with prior work demonstrating strong empirical performance~\citep{rajput2023recommender, deng2025onerec}.
Because assigning a unique token to every item would require a prohibitively large output vocabulary, items in this paradigm are represented as semantic identifiers (SIDs), which are short token sequences produced by a hierarchical vector quantizer. The language model is then trained to autoregressively generate the SID of the item a user will interact with next.

A complementary line of work augments generative recommenders with explicit reasoning traces, typically by prompting the model to produce a reasoning trace before SID decoding~\citep{lin2025rec, you2025r, liu2025onerec, he2026reasoning}.
These traces are intended to help the model interpret user histories and surface relevant preference signals prior to generating an item identifier.
Recent systems increasingly adopt a post-training recipe that first aligns the backbone to the recommendation vocabulary via supervised fine-tuning (SFT), then activates structured reasoning, and finally applies reinforcement learning (RL) from verifiable rewards (RLVR)~\citep{lambert2024tulu} when reliable outcome checks are available. This recipe bridges SFT-only reasoning recommenders and newer SFT-then-RL variants, including GRPO-style approaches for recommendation~\citep{zhu2025rank}. Once a response contains both a reasoning trace and a multi-token SID, the central training question becomes one of credit assignment under the match feedback of SID: \textit{which SID-token prediction caused the final mismatch?}

For hierarchical SID decoding, credit assignment should respect the positional structure of the SID hierarchy. 
While final-item verification checks only the whole SID tuple, each SID-token position reveals whether the model has captured fine-grained item information or has failed to identify the broad intent.
Outcome-reward RL discards this structure by broadcasting a scalar reward across the reasoning tokens and all SID-token predictions. 
This mismatch produces two failure modes: matched SID-token positions can be penalized together with the mismatched position, and scalar advantage broadcast cannot localize the mismatched SID-token prediction among unrelated reasoning steps. 
The underlying issue is action granularity: the rollout is too coarse, yet individual tokens are too fine-grained to serve as the credit-assignment unit for structured SID decoding.

To address this mismatch, we propose \textbf{SAPO}, a reinforcement learning method for the RL stage of reasoning-based generative recommendation. SAPO exploits the structure already present in SID decoding: each generated SID tuple exposes multiple verifiable token predictions, giving every rollout a natural decomposition into reasoning steps. We therefore define a reasoning step as one thinking block paired with its corresponding SID token and treat this pair as the action unit for reinforcement learning. Once the action unit is fixed, the rest of the update follows naturally: SAPO assigns a verifiable match reward to each reasoning step from SID-token correctness, estimates a group-relative advantage at the reasoning-step level, and optimizes a step-aligned surrogate with step-normalized token aggregation so that each reasoning step receives its own update signal. The resulting credit assignment remains consistent with the original exact-match objective while providing denser supervision when exact SID-tuple matches are sparse. Our main contributions are as follows:
\begin{itemize}[leftmargin=*,itemsep=2pt,topsep=2pt]
    \item We identify the mismatch between exact-match outcome feedback and reasoning-based SID decoding as a problem of action granularity: rollout-level rewards cannot identify the specific mismatch steps, while rollout-level advantages can penalize matched step positions (Section~\ref{sec:action-granularity}).
    \item We propose \textbf{SAPO}, which treats each reasoning step, pairing one thinking block with one SID token, as the RL action unit and derives from this choice a verifiable per-step match reward, a step-level group-relative advantage, and a step-aligned surrogate with step-normalized token aggregation, and introduces no learned reward model (Section~\ref{sec:method}).
    \item Experiments on three real-world recommendation datasets show that SAPO produces more stable RL dynamics and yields strong overall Recall and NDCG results, especially in settings where fine-grained credit assignment matters (Section~\ref{sec:experiments}).
\end{itemize}

%% file: sections/02_preliminary.tex
% =====================================================================
\section{Preliminaries}
\label{sec:prelim}
This section introduces the notation and setup used throughout the paper; Appendix~\ref{app:notation} provides a consolidated summary of symbols and notations used in the paper.

\subsection{Generative Recommendation with Hierarchical Semantic Identifiers}
\label{sec:formulation}

We formulate generative recommendation over hierarchical semantic identifiers (SIDs).
Let $\mathcal{C}$ denote the item catalog.
Following recent work on generative recommendation with SIDs~\citep{rajput2023recommender, deng2025onerec},
each item $v\in\mathcal{C}$, where $v$ denotes an item, is mapped to a tuple of $K$ SIDs.
The tuple can be obtained with a hierarchical vector-quantization scheme, such as a residual-quantization variational autoencoder (RQ-VAE) adapted from residual-quantized image generation~\citep{lee2022autoregressive}.
Let $\mathcal{V}^{(k)}$ denote the vocabulary of the $k$-th codebook for $k=1,\ldots,K$.
The quantizer defines an injection $\phi\colon\mathcal{C}\to\mathcal{V}^{(1)}\times\mathcal{V}^{(2)}\times\cdots\times\mathcal{V}^{(K)}$.
Let $\Phi\coloneqq\phi(\mathcal{C})$ denote the set of valid catalog SID tuples.
Each item $v$ is then uniquely represented as $\phi(v) = (s^{(1)}, s^{(2)}, \ldots, s^{(K)})$, where $s^{(k)}\in\mathcal{V}^{(k)}$ captures progressively finer-grained item semantics, with larger $k$ corresponding to finer semantic levels.
We set $K{=}3$, which suffices to uniquely identify every item in our catalogs.

Given a chronologically ordered user interaction history $x = (v_1, \ldots, v_{T_x})$, we study the sequential recommendation task of predicting the next item $v^\star=v_{T_x+1}$.
The history is encoded as a textual prompt that contains past interactions and a task instruction, and we learn a policy $\pi_\theta(y|x)$ parameterized by a language model with parameters $\theta$.
Its response $y$ terminates in a valid SID tuple $\phi(\hat{v})\in\Phi$, where $\hat v$ denotes the predicted next item.
In the simplest non-reasoning setting, the policy directly decodes the $K$ SID tokens.
We defer the reasoning-augmented response layout and the SAPO update to Section~\ref{sec:method}.

\subsection{Three-Stage Training Setup}
\label{sec:post-training-setup}
Motivated by recent reasoning-based generative recommenders such as OneRec-Think~\citep{liu2025onerec} and SIDReasoner~\citep{he2026reasoning}, we adopt a three-stage setup for reasoning-based SID decoding.
\textbf{Stage~1} aligns the language model to the SID vocabulary and the recommendation prompt format, so the model can produce valid SID tokens for the next-item prediction task.
\textbf{Stage~2} trains the policy to generate $K$ thinking blocks followed by the $K$ SID tokens of the predicted item, giving the response one reasoning block for each SID-token position.
\textbf{Stage~3} applies reinforcement learning (RL) from final exact-SID feedback, while the SID hierarchy exposes intermediate SID-token match signals for credit assignment.
The next section analyzes why the standard outcome-reward RL objective is misaligned with this per-step decoding structure.

%% file: sections/025_theoretical_analysis.tex
\begin{figure}[t]
    \centering
    \includegraphics[width=0.95\linewidth]{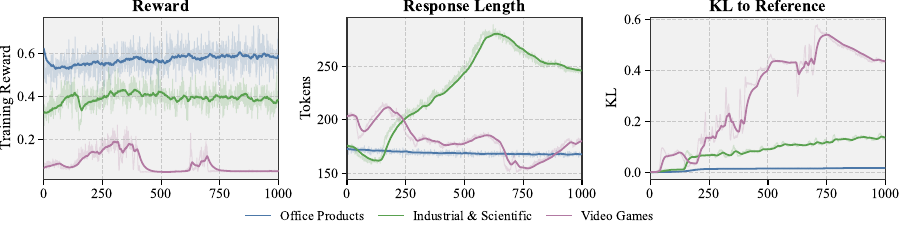}
    \caption{
        Motivating failure mode of outcome-reward GRPO across three recommendation datasets.
        Here, exact-match feedback gives a reward of 3. 
        GRPO typically improves reward early but later plateaus or becomes unstable as response length change and KL to the reference increase.
        The reward sparsity makes the instability of rollout-level credit assignment.
    }
    \label{fig:grpo-motivation}
    \vspace{-3mm}
\end{figure}

\section{The Action-Granularity Mismatch in Outcome-Reward GRPO}
\label{sec:action-granularity}

\noindent\textbf{Outcome-reward GRPO.}
We focus on the RL stage of Section~\ref{sec:post-training-setup}, where the policy already generates reasoning traces and SID tokens. 
For each prompt $x\in\mathcal{D}_{\mathrm{RL}}$, where $\mathcal{D}_{\mathrm{RL}}$ denotes the set of reinforcement-learning training prompts, outcome-reward GRPO samples $G$ rollouts $\{y_i\}_{i=1}^G$ (with $[G]=\{1,\ldots,G\}$) and assigns each rollout a scalar reward $r(x,y_i)$, typically an exact-match indicator for the ground-truth SID tuple. 
The group-relative advantage is
\begin{equation}
\label{eq:grpo-adv}
\hat{A}_{i}
=
\frac{r(x,y_i)-\operatorname{mean}_{j\in[G]} r(x,y_j)}
{\operatorname{std}_{j\in[G]} r(x,y_j)}.
\end{equation}
It then optimizes the token-level clipped policy-gradient surrogate:
\begin{equation}
\label{eq:grpo}
\mathcal{J}_{\mathrm{GRPO}}(\theta)
=
\mathbb{E}_{x,\{y_i\}}\!\left[
\frac{1}{\sum_{\ell=1}^{G}|y_\ell|}
\sum_{i=1}^{G}\sum_{t\in y_i}
\min\!\left(
w_{i,t}(\theta)\hat{A}_{i},\;
\operatorname{clip}(w_{i,t}(\theta),1{-}\epsilon,1{+}\epsilon)\hat{A}_{i}
\right)
\right],
\end{equation}
where
$w_{i,t}(\theta)=
\pi_\theta(y_{i,t}\mid x,y_{i,<t})/
\pi_{\theta_{\mathrm{old}}}(y_{i,t}\mid x,y_{i,<t})$
is the per-token importance ratio, $\theta_{\mathrm{old}}$ denotes the policy parameters before the update, and $\epsilon$ is the clipping radius.
A Kullback--Leibler (KL) penalty to the Stage~2 reference is used in practice but omitted here because it is shared by all Stage~3 variants.

\noindent\textbf{Where granularity fails.}
Eq.~\eqref{eq:grpo} assigns one scalar advantage $\hat A_i$ to the entire response.
For large-catalog SID decoding, this scalar is too coarse: final feedback reports the mismatch, but not which SID-token prediction caused it.
Consider $K=3$ with ground-truth tuple
$(s_\star^{(1)},s_\star^{(2)},s_\star^{(3)})$.
The near-miss prediction with one last mismatch token
$(s_\star^{(1)},s_\star^{(2)},\tilde s^{(3)})$
and the coarse-error prediction
$(\tilde s^{(1)},\tilde s^{(2)},\tilde s^{(3)})$
both receive zero exact-match reward.
Outcome-reward GRPO therefore assigns them the same rollout-level advantage and broadcasts it to every token; in the near miss, matched SID-token predictions are updated together with the mismatched prediction.

\noindent\textbf{Why reward shaping is insufficient.}
Figure~\ref{fig:grpo-motivation} illustrates this failure mode across all three recommendation datasets: reward dynamics plateau or become irregular as response length, KL to the reference, and format errors grow.
Reward sparsity amplifies this problem because exact SID-tuple matches are rare, making group-relative advantages weak or noisy, especially on harder recommendation datasets.
However, sparsity is not the whole issue.
A denser rollout-level reward could replace the exact-match indicator; for example, one could use
\begin{equation}
    r_{\mathrm{shape}}(x,y_i)
=
\sum_{k=1}^K
\mathbbm{1}\!\left[s_i^{(k)}=s_\star^{(k)}\right],
\end{equation}
but Eq.~\eqref{eq:grpo} would still convert this score into a single rollout-level advantage and assign it to every token.
Such shaping tells that a rollout is partially correct, but it still cannot isolate the mismatched SID-token prediction.
\textbf{This is what we refer to as the action-granularity mismatch.}
The rollout is too coarse because it hides which of the $K$ verifiable SID-token predictions succeeded or failed.
Individual response tokens are too fine because arbitrary reasoning tokens do not have verifiable correctness labels, whereas SID-token positions do.
The natural credit-assignment unit lies in between: the reasoning step that pairs a thinking block with its SID token.

This mismatch causes two concrete problems.
First, \emph{matched SID-token credit is discarded}: matched SID-token predictions receive no positive signal when another SID token is wrong, so plausible near misses are treated as complete failures.
Second, \emph{reasoning-step advantage localization is lost}: an error at one SID-token position assigns the same scalar weight to token gradients from unrelated thinking blocks and SID-token positions.
In long reasoning responses, this diffuse broadcast can reinforce or penalize many reasoning tokens together even though the verifier only checks the SID-token predictions, contributing to length drift, format degradation, and weak group-relative contrast.
These observations motivate aligning reward, advantage, and token aggregation at the reasoning-step granularity in the next section.

%% file: sections/03_methodology.tex
% =====================================================================
\section{Methodology}
\label{sec:method}

Motivated by the analysis above, SAPO addresses the RL-stage credit-assignment problem in reasoning-based SID decoding.
Its central design choice is to use each reasoning step, defined as a SID-token prediction together with its associated thinking block, as the credit-assignment (action) unit.
Once this reasoning step is fixed, reward placement, advantage estimation, and token aggregation should all be defined at the same granularity.
We now define the reasoning-step decomposition used by SAPO and present the reasoning-step match reward and the step-aligned surrogate. The overall pipeline is illustrated in Figure~\ref{fig:sapo-overview}.

\subsection{SAPO Overview and Reasoning-Step Decomposition}
\label{sec:sapo-overview}

Reasoning-augmented generative recommenders ask the policy to generate structured reasoning before producing SID tokens.
In our implementation, the model first generates all $K$ thinking blocks and then the $K$ SID tokens as a contiguous block:
\begin{equation}
\label{eq:response-impl}
y \;=\; \bigl[\,\underbrace{\langle\texttt{think}\rangle\,\tau^{(1)}\,\langle/\texttt{think}\rangle\,\oplus\,\cdots\,\oplus\,\langle\texttt{think}\rangle\,\tau^{(K)}\,\langle/\texttt{think}\rangle}_{\text{reasoning trace with $K$ thinking blocks}}\,\oplus\,\underbrace{s^{(1)}\,s^{(2)}\,\cdots\,s^{(K)}}_{\text{all $K$ SID tokens}}\,\bigr].
\end{equation}
This layout enables prefix-tree (trie)-based constrained decoding with key-value (KV) cache reuse across SID-token positions.
For credit assignment, SAPO pairs the $k$-th thinking block with the $k$-th SID token and treats that pair as \emph{reasoning step~$k$}.
Formally, under Eq.~\eqref{eq:response-impl} we define reasoning step~$k$ as $(\tau^{(k)}, s^{(k)})$, write $|\tau^{(k)}|$ for the number of tokens in the $k$-th thinking block, and define $|y^{(k)}|=|\tau^{(k)}|+1$ as the size of the paired unit including its SID token.
This pairing is used only to assign credit under the blocked decoding layout; it does not change the autoregressive factorization or causal decoding order.
Appendix~\ref{app:decoding-order} details the implementation layout and explains why SAPO's reward and advantage remain well defined under this pairing.

\subsection{Reasoning-Step Match Reward}
\label{sec:reward-design}

With the reasoning step fixed, reward should be placed at SID-token positions rather than only at the end of the rollout.
For a rollout $y_i$ with generated SID sequence $(s^{(1)}_i, \ldots, s^{(K)}_i)$ and ground truth $(s^{\text{gt},(1)}, \ldots, s^{\text{gt},(K)})$, SAPO uses a reasoning-step match reward term plus a small final format bonus:
\begin{equation}
\label{eq:step-reward}
r_{i,k} \;=\; \alpha\cdot\mathbbm{1}\!\left[s^{(k)}_i = s^{\text{gt},(k)}\right]\;+\;\beta\cdot\mathbbm{1}\!\left[k=K\right]\cdot b_i,
\end{equation}
where $\mathbbm{1}[\cdot]$ is the indicator function, $b_i=1$ when $y_i$ contains exactly $K$ extracted thinking blocks and ends in a valid SID tuple, and $b_i=0$ otherwise.
The first term preserves credit for matched SID-token positions when other SID token mismatches, while the second term attaches a small bonus to the last SID-token position when the rollout is structurally well formed.
% We set $\alpha{=}1.0$ and $\beta{=}0.2$ throughout.
The following objective-consistency discussion focuses on the reasoning-step match reward term.

\noindent\textbf{Objective consistency.}
Let $m_{i,k}\coloneqq\alpha\,\mathbbm{1}[s_i^{(k)}=s_i^{\mathrm{gt},(k)}]$ denote the match component of Eq.~\eqref{eq:step-reward}.
Although SAPO decomposes exact-match feedback into reasoning-step rewards, it does not change the exact-match optimum under the standard realizability condition:
\begin{equation}
\label{eq:lcp-identity}
\sum_{k=1}^{K} m_{i,k}
=
\alpha\,\bigl|\{k\in[K]:s_i^{(k)}=s_i^{\mathrm{gt},(k)}\}\bigr|.
\end{equation}
Let $\mathcal{J}_{\mathrm{out}}$ be the expected exact-match reward and $\mathcal{J}_{\mathrm{match}}$ be the expected cumulative match reward.
\begin{proposition}[Reasoning-step match reward and outcome consistency]
\label{prop:objective-consistency}
If the policy class $\Pi$ contains an exact-match policy, then
\begin{equation}
\label{eq:argmax-consistency}
\arg\max_{\pi\in\Pi}\mathcal{J}_{\mathrm{out}}(\pi)
=
\arg\max_{\pi\in\Pi}\mathcal{J}_{\mathrm{match}}(\pi)
\end{equation}
\end{proposition}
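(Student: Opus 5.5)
Write $E(y)\coloneqq\mathbbm{1}[s^{(k)}=s^{\mathrm{gt},(k)}\ \forall k\in[K]]$ for the exact-match indicator of a response $y$ to a prompt $x$. Then $\mathcal{J}_{\mathrm{out}}(\pi)=\mathbb{E}_{x}\mathbb{E}_{y\sim\pi(\cdot\mid x)}[E(y)]$ and $\mathcal{J}_{\mathrm{match}}(\pi)=\mathbb{E}_{x}\mathbb{E}_{y\sim\pi(\cdot\mid x)}\bigl[\sum_{k=1}^K m_k(y)\bigr]$, where $m_k$ is the match component defined before Eq.~\eqref{eq:lcp-identity}. Both objectives are expectations of bounded pointwise scores, so the plan is to work per response and compare each score's maximum value with the set of responses that attain it.

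\emph{Step 1 (pointwise bounds).} Using Eq.~\eqref{eq:lcp-identity}, we have $0\le\sum_k m_k(y)\le\alpha K$. Assuming $\alpha>0$, equality holds if and only if every SID position matches, that is, if and only if $E(y)=1$. Likewise $E(y)\le 1$, with equality exactly on that same set. So the two scores share the same set of maximizers: the responses whose SID tuple equals the ground truth. This gives the uniform bounds $\mathcal{J}_{\mathrm{out}}\le 1$ and $\mathcal{J}_{\mathrm{match}}\le\alpha K$ over all of $\Pi$.

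\emph{Step 2 (realizability pins the optimal values).} By hypothesis, $\Pi$ contains an exact-match policy $\pi^\star$, meaning one with $E(y)=1$ almost surely under the data distribution and the policy's sampling. For this policy $\mathcal{J}_{\mathrm{out}}(\pi^\star)=1$ and, by Step~1, $\mathcal{J}_{\mathrm{match}}(\pi^\star)=\alpha K$. Both upper bounds are therefore attained, and the two maxima equal $1$ and $\alpha K$ respectively.

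\emph{Step 3 (equal argmax sets).} Since the nonnegative gaps $1-E(y)$ and $\alpha K-\sum_k m_k(y)$ integrate to zero only when they vanish almost surely, we get
\[
\pi\in\arg\max\mathcal{J}_{\mathrm{out}}\iff \Pr_{\pi}[E(y)=1]=1\iff \Pr_\pi\Bigl[\textstyle\sum_k m_k(y)=\alpha K\Bigr]=1\iff\pi\in\arg\max\mathcal{J}_{\mathrm{match}}.
\]
The middle equivalence is the pointwise set equality from Step~1. This yields Eq.~\eqref{eq:argmax-consistency}.

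The main obstacle is not the algebra but pinning down the hypotheses. First, one must read ``exact-match policy'' as almost-sure exact match for every prompt in the support; without realizability, a policy maximizing $\mathcal{J}_{\mathrm{match}}$ could trade exact matches for many partial matches, and the proposition fails. Second, one must note that $\alpha>0$ is required, and that the format bonus $\beta$ is excluded as the statement says. If the bonus were included, I would observe that every exact-match response ending in a valid tuple with $K$ blocks also maximizes the bonus, so the argument would go through provided the exact-match policy is well formed.
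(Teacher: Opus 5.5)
Your proof is correct and follows the same route as the paper's argument (Part~2 in Appendix~\ref{app:proofs-lcp}). You bound the pointwise scores by $1$ and $\alpha K$, observe that both bounds are attained exactly on the exact-match responses, use realizability to make these bounds the optimal values, and conclude that both $\arg\max$ sets are the policies that exact-match almost surely; your explicit $\alpha>0$ assumption and the ``nonnegative gap integrates to zero'' step make precise what the paper leaves implicit.

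One aside, outside the statement, is imprecise. If the format bonus were added, the pointwise maximizers would be the responses that are exact-match \emph{and} well formed. You would then only get the $\arg\max$ of the augmented objective contained in $\arg\max\mathcal{J}_{\mathrm{out}}$, not equal to it, unless every exact-match policy in $\Pi$ is also well formed.
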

Thus the reasoning-step match reward supplies denser supervision while preserving the exact-match optimum; Appendix~\ref{app:proofs-lcp} gives the proof and further discussion.

\begin{figure*}[t]
\centering
\includegraphics[width=\textwidth]{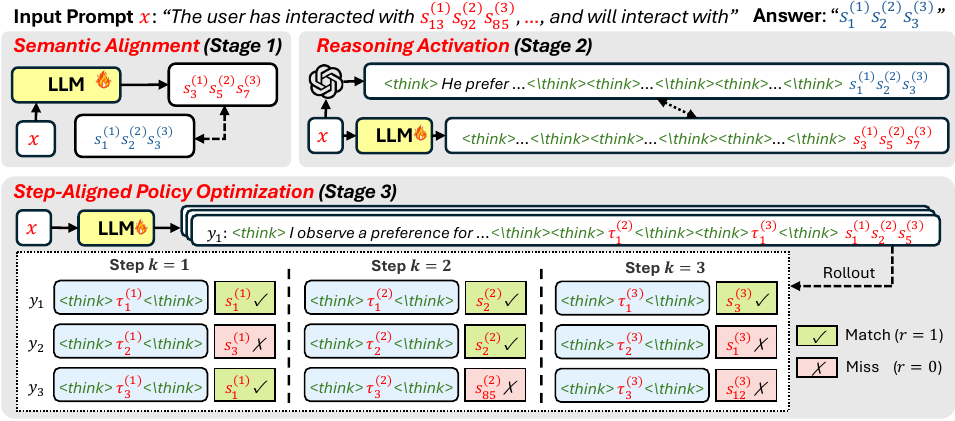}
\caption{Overview of \textbf{SAPO}.
Stage~1 aligns the language model to the SID vocabulary, Stage~2 activates level-aware reasoning, and Stage~3 applies step-aligned reinforcement learning.
Under the released decoding layout, the policy generates $K$ thinking blocks followed by the $K$ SID tokens of the predicted item.
SAPO assigns a reasoning-step match reward using SID-token correctness, attaches a small format bonus at the last SID-token position, computes a reasoning-step group-relative advantage, and optimizes a surrogate with step-normalized token aggregation.}
\label{fig:sapo-overview}
\vspace{-3mm}
\end{figure*}

\subsection{Reasoning-Step Advantage and SAPO Surrogate}
\label{sec:step-aligned-grpo}

Once reward is placed at SID-token positions, both the advantage and the surrogate loss should be defined at the same granularity.
SAPO therefore defines advantages over reasoning steps $(\tau^{(k)}, s^{(k)})$, rather than over the whole sequence (too coarse) or each individual token (too fine-grained).
Given the return $r_{i,k}$ at SID-token position $k$ in Eq.~\eqref{eq:step-reward}, we compute a group-relative advantage and its step-normalized form independently for each reasoning step:
\begin{equation}
\label{eq:step-adv}
\hat{A}_{i,k}
=
\frac{r_{i,k} - \operatorname{mean}_{i'\in[G]}(r_{i',k})}
{\operatorname{std}_{i'\in[G]}(r_{i',k})},
\quad
\tilde{A}_{i,k}
=
\frac{\hat{A}_{i,k}}{|y_i^{(k)}|}.
\end{equation}
Here ``step-normalized'' means that the reasoning-step advantage is divided by the paired unit length $|y_i^{(k)}|$, which includes the $k$-th thinking block and its paired SID token.
Using the same per-token importance ratio $w_{i,t}(\theta)$ as in Eq.~\eqref{eq:grpo}, SAPO optimizes the resulting step-aligned surrogate with step-normalized token aggregation,
% \zy{Following Rank-GRPO, use colors and braces to denote different components of the formula, with particular emphasis on highlighting the distinctions from GRPO.}
\begin{equation}
\label{eq:step-aligned-obj}
\mathcal{J}_{\mathrm{SAPO}}(\theta)
= \mathbb{E}_{x,\{y_i\}}\!\left[
\frac{1}{\sum_{\ell=1}^{G}|y_\ell|}
\sum_{i=1}^{G}\sum_{k=1}^{K}\sum_{t\in y_i^{(k)}} \right.
\left.
\min\!\left(
w_{i,t}(\theta)\tilde{A}_{i,k},\;
\operatorname{clip}(w_{i,t}(\theta),1{-}\epsilon,1{+}\epsilon)\tilde{A}_{i,k}
\right)
\right].
\end{equation}

\noindent\textbf{Gradient analysis.}
Ignoring clipping for exposition, Eq.~\eqref{eq:step-aligned-obj} gives the gradient view:
\begin{equation}
\label{eq:step-grad-view}
\begin{aligned}
\nabla_\theta \mathcal{J}_{\mathrm{SAPO}}(\theta)
&\propto
\mathbb{E}\!\left[
\frac{1}{\sum_{\ell=1}^{G}|y_\ell|}
\sum_{i=1}^{G}\sum_{k=1}^{K}\sum_{t\in y_i^{(k)}}
\,g_{i,k,t}(\theta)
\right],\\
g_{i,k,t}(\theta)
&=
\underbrace{\textcolor{blue}{w_{i,t}(\theta)}}_{\text{importance}}
\cdot
\underbrace{\textcolor{magenta}{\hat{A}_{i,k}}}_{\text{reasoning-step advantage}}
\cdot
\underbrace{\textcolor{cyan}{\frac{1}{|y_i^{(k)}|}\nabla_\theta\log\pi_\theta(y_{i,t}\mid x,y_{i,<t})}}_{\text{step-normalized token gradient}}.
\end{aligned}
\end{equation}
This decomposition highlights the two differences from rollout-level GRPO.
First, SAPO replaces the rollout-wide scalar advantage with the reasoning-step signal $\hat{A}_{i,k}$, so rollouts that differ only at SID-token position~$k$ produce a contrastive update at that step without assigning the same advantage to unrelated SID-token positions.
Second, the factor $1/|y_i^{(k)}|$ averages the token gradients within each reasoning step, ensuring that the update magnitude depends on the quality of the reasoning step rather than the number of tokens it contains.
Together, these two changes explain why SAPO mitigates the excessive reasoning verbosity and length drift observed under rollout-level advantage broadcast in Section~\ref{sec:rl-dynamics}.

%% file: sections/04_experiments.tex
% =====================================================================
\section{Experiments}
\label{sec:experiments}
% [Polish] Original: We aim to answer five research questions.
We address three research questions (RQs).
\textbf{(RQ1)}~Does SAPO improve end-task recommendation quality against generative recommendation baselines?
% [Polish] Original: \textbf{(RQ2)}~Does it translate into better end-task recommendation quality against outcome-reward baselines and other reasoning-based generative recommenders?
\textbf{(RQ2)}~{Does reasoning-step credit assignment stabilize reinforcement learning dynamics for recommendation with sparse exact-match feedback?}
\textbf{(RQ3)}~How do the reasoning-step match reward and reasoning-step advantage contribute to SAPO?
% Future extension:
% \textbf{(RQ4)}~How sensitive is SAPO to its key hyperparameters?
% [Polish] Original: \textbf{(RQ5)}~Does SAPO generalize across LLM backbones of different scales?
% \textbf{(RQ5)}~Does SAPO generalize across LLM backbones of different scales and families?

% \subsection{Datasets}
% \label{sec:datasets}
\subsection{Experimental Setup}
\noindent\textbf{Datasets.}
We evaluate on three categories from the Amazon Reviews dataset~\citep{ni2019justifying}, including \texttt{Office-Products}, \texttt{Video-Games}, and \texttt{Industrial-and-Scientific}, following the widely adopted sequential recommendation protocol of~\citet{kang2018self}.
% [Polish] Original: For each category, we sort user interactions chronologically, keep users with at least five interactions, and construct (history, next-item) pairs with the leave-one-out split: the last interaction is used for test, the second-to-last for validation, and the remaining interactions for training.
For each category, we sort user interactions chronologically, keep users with at least five interactions, and construct (history, next-item) pairs under a leave-one-out split: the last interaction forms the test record, the second-to-last forms the validation record, and the remaining prefix forms the training set.
% [Polish] Original: For every item we learn a $K{=}3$-level RQ-VAE codebook over the item text embeddings (title + category + brand + description) and use the resulting SID tuple as the target.
For every item, we learn a $K{=}3$-level RQ-VAE codebook over the item text embeddings (title, category, brand, and description) and use the resulting SID tuple as the target.
% [Polish] Original: Detailed statistics and the SID tokenization procedure are deferred to Appendix~\ref{app:data}; concrete record and prompt examples are provided in Appendix~\ref{app:data-examples}.
Detailed statistics and the SID tokenization procedure are deferred to Appendix~\ref{app:data}; concrete record and prompt examples are provided in Appendix~\ref{app:data-examples}.
% [Polish] Original: Implementation details, backbone variants, hyperparameters, and reproducibility information are deferred to Appendix~\ref{app:implementation}.
Implementation details, backbone, hyperparameters, and reproducibility information are deferred to Appendix~\ref{app:implementation}.
To keep the main text concise, single-category figures and diagnostics use \texttt{Industrial-and-Scientific} as the representative dataset.

% \subsection{Baselines}
% \label{sec:baselines}
\noindent\textbf{Baselines.}
% [Polish] Original: We compare SAPO against representative methods from three families.
We compare SAPO against representative methods from three families.
\textbf{(i)~Traditional sequential recommenders}: GRU4Rec~\citep{hidasi2015session}, SASRec~\citep{kang2018self}, and Caser~\citep{tang2018personalized}.
\textbf{(ii)~Generative recommenders without reasoning}: TIGER~\citep{rajput2023recommender}, HSTU~\citep{zhai2024actions}, LCRec~\citep{zheng2024adapting}, and LETTER~\citep{wang2024learnable}.
% [Polish] Original: \textbf{(iii)~Reasoning-based generative recommenders trained with outcome-reward RL}: SIDReasoner~\citep{he2026reasoning} (our most direct head-to-head baseline, which shares our Stages 1 and 2 but uses outcome-reward RL in Stage~3), OneRec-Think~\citep{liu2025onerec}, R$^2$ec~\citep{you2025r}, and ReaRec~\citep{tang2025think}.
\textbf{(iii)~Reasoning-based generative recommenders}: SIDReasoner~\citep{he2026reasoning}, OneRec-Think~\citep{liu2025onerec}, R$^2$ec~\citep{you2025r}, and ReaRec~\citep{tang2025think}.
% [Polish] Original: All methods are re-implemented and evaluated under the same protocol for fair comparison. A brief description of each baseline is provided in Appendix~\ref{app:baselines}.
We report SIDReasoner results by evaluating the official checkpoints provided by the authors, and re-implement all other baselines for fair comparison.
Brief descriptions of all baselines are provided in Appendix~\ref{app:baselines}.

\subsection{Comparison with Baselines (RQ1)}
\label{sec:main-results}

% [Polish] Original: To answer \textbf{RQ2}, we evaluate SAPO against the baselines described in Section~\ref{sec:baselines} on the held-out test set of each category.
To address \textbf{RQ1}, we evaluate SAPO against the baselines on the held-out test set of each category.
% [Polish] Original: Following common practice~\citep{kang2018self}, we report Recall@$k$ (R@$k$) and NDCG@$k$ (N@$k$) at $k\!\in\!\{5, 10, 20\}$.
Following common practice~\citep{kang2018self}, we report Recall@$k$ (R@$k$) and normalized discounted cumulative gain (NDCG@$k$, denoted N@$k$) at $k\!\in\!\{5, 10\}$.

\begin{table}[t]
\centering
\caption{Main results on three Amazon-review datasets.
R@$k$ and N@$k$ denote Recall@$k$ and NDCG@$k$, respectively.
The best result in each column is \textbf{bolded} and the second best is \underline{underlined}.
% All methods are re-implemented and evaluated under our unified protocol for fair comparison.
}
\label{tab:main}
\resizebox{\textwidth}{!}{%
\begin{tabular}{@{}l cc cc cc cc cc cc@{}}
\toprule
& \multicolumn{4}{c}{\textbf{\texttt{Office-Products}}}
& \multicolumn{4}{c}{\textbf{\texttt{Video-Games}}}
& \multicolumn{4}{c}{\textbf{\texttt{Industrial-and-Scientific}}} \\
\cmidrule(lr){2-5} \cmidrule(lr){6-9} \cmidrule(lr){10-13}
\textbf{Method}
& R@5 & N@5 & R@10 & N@10
& R@5 & N@5 & R@10 & N@10
& R@5 & N@5 & R@10 & N@10 \\
\midrule
\multicolumn{13}{@{}l}{\textit{(i) Traditional sequential recommenders}} \\
GRU4Rec& 0.0240 & 0.0135 & 0.0386 & 0.0181 & 0.0207 & 0.0139 & 0.0357 & 0.0187 & 0.0457 & 0.0408 & 0.0532 & 0.0432 \\
SASRec& 0.1019 & 0.0824 & 0.1167 & 0.0871 & 0.0501 & 0.0345 & 0.0723 & 0.0416 & 0.0807 & 0.0647 & 0.0964 & 0.0697 \\
% BERT4Rec& -- & -- & -- & -- & -- & -- & -- & -- & -- & -- & -- & -- \\
Caser& 0.0232 & 0.0139 & 0.0347 & 0.0176 & 0.0247 & 0.0161 & 0.0417 & 0.0214 & 0.0459 & 0.0377 & 0.0578 & 0.0415 \\
\midrule
\multicolumn{13}{@{}l}{\textit{(ii) Generative recommenders without reasoning}} \\
TIGER& 0.0569 & 0.0415 & 0.0729 & 0.0467 & 0.0475 & 0.0328 & 0.0692 & 0.0397 & 0.0979 & 0.0792 & 0.1291 & 0.0885 \\
HSTU& 0.1204 & 0.1069 & 0.1323 & 0.1107 & 0.0539 & 0.0396 & 0.0746 & 0.0462 & 0.1008 & 0.0898 & 0.1138 & 0.0940 \\
LCRec& 0.0193 & 0.0135 & 0.0370 & 0.0191 & 0.0158 & 0.0115 & 0.0177 & 0.0121 & 0.0552 & 0.0512 & 0.0600 & 0.0527 \\
LETTER
& 0.1315 & 0.1074 & 0.1520 & 0.1139   % Office
& 0.0445 & 0.0294 & 0.0709 & 0.0378   % Video (Games)
& 0.1080 & 0.0850 & 0.1389 & 0.0950 \\ % Industrial
% CoRAB& -- & -- & -- & -- & -- & -- & -- & -- & -- & -- & -- & -- \\
\midrule
\multicolumn{13}{@{}l}{\textit{(iii) Reasoning-based generative recommenders}} \\

R$^2$ec
& 0.1147 & 0.0894 & 0.1486 & 0.1004   % Office
& \textbf{0.0655} & \underline{0.0399} & \underline{0.0931} & \underline{0.0525}   % Video (Games)
& 0.0880 & 0.0774 & 0.1253 & 0.0774 \\ % Industrial

ReaRec
& 0.1173 & 0.0988 & 0.1385 & 0.1057
& 0.0568 & 0.0381 & 0.0843 & 0.0470
& 0.0973 & 0.0796 & 0.1205 & 0.0870 \\

OneRec-Think
& 0.1353 & 0.1107 & 0.1506 & 0.1166 & 0.0413 & 0.0296 & 0.0575 & 0.0349 & 0.0464 & 0.0398 & 0.0520 & 0.0416 \\

SIDReasoner
& \textbf{0.1381} & \underline{0.1116} & \underline{0.1629} & \underline{0.1197}
& 0.0578 & 0.0385 & 0.0897 & 0.0487
& \underline{0.1109} & \underline{0.0905} & \textbf{0.1438} & \underline{0.1010} \\
\midrule
% \multicolumn{13}{@{}l}{\textit{(iv) SAPO (ours)}} \\
\rowcolor{gray!20}
SAPO 
& \underline{0.1362} & \textbf{0.1141} & \textbf{0.1643} & \textbf{0.1245}
& \underline{0.0620} & \textbf{0.0413} & \textbf{0.0986} & \textbf{0.0529} 
& \textbf{0.1176} & \textbf{0.0931} & \underline{0.1416} & \textbf{0.1012} \\
\bottomrule
\end{tabular}%
\vspace{-5mm}
}
\end{table}

Table~\ref{tab:main} shows that SAPO improves end-task ranking quality, while the pattern is more informative than uniform dominance.
The most direct comparison is SIDReasoner, which shares the same broad reasoning-based setup but uses a rollout-level shaped reward in Stage~3: SAPO improves every NDCG column over SIDReasoner and improves R@10 on two of the three categories, while remaining slightly below SIDReasoner on Office-Products R@5 and Industrial-and-Scientific R@10.
This pattern suggests that reasoning activation alone is not sufficient; once reasoning is introduced, the RL credit signal must also respect the hierarchical SID-token structure.
Across the full table, SAPO achieves the best NDCG on all  columns, indicating that step-aligned credit assignment improves the ordering of the ground-truth item within the top-ranked list.
At the same time, SAPO ranks first or second on all reported Recall columns rather than uniformly dominating every Recall setting.
SAPO is consistently stronger than non-reasoning generative baselines, supporting the broader conclusion that reasoning benefits most when RL credit assignment follows the reasoning-step structure.

\begin{figure}[t]
    \centering
    \includegraphics[width=\linewidth]{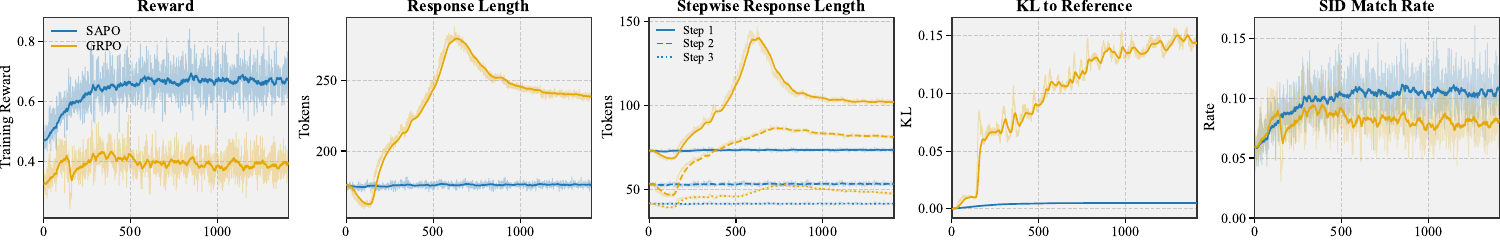}
    \caption{RL training dynamics on the Industrial-and-Scientific dataset.
    We compare SAPO and GRPO from the same Stage~2 checkpoint using training reward, response length, stepwise response length, KL to the reference, and SID match rate; blue denotes SAPO and orange denotes GRPO.}
    \label{fig:diagnostic}
    \vspace{-5mm}
\end{figure}

\subsection{Training Dynamics (RQ2)}
\label{sec:rl-dynamics}
We next address \textbf{RQ2} by comparing the training dynamics of GRPO and SAPO from the same Stage~2 checkpoint.
Figure~\ref{fig:diagnostic} reports the representative \texttt{Industrial-and-Scientific} run using five signals: training reward, response length, stepwise response length, KL to the reference, and SID match rate.
SAPO shows a steadier reward trajectory and stronger SID-match dynamics, while keeping both overall reasoning-trace length and per-reasoning-step thinking-block lengths more controlled.
The per-reasoning-step length curves further suggest that this control does not come from shortening only one thinking block, but from stabilizing the reasoning budget across SID-token positions.
GRPO exhibits noisier and less stable reward dynamics under sparse exact-match feedback, together with larger length and KL drift; Appendix~\ref{app:diagnostic-extra} provides the corresponding diagnostics for the other two datasets.
The SID match-rate panel links these optimization dynamics to the target hierarchy, showing whether training actually improves SID-token predictions rather than only increasing rollout-level reward.
Taken together, these dynamics are consistent with the mechanism in Section~\ref{sec:step-aligned-grpo}: SAPO ties updates to reasoning-step advantages and step-normalized token aggregation, rather than broadcasting one scalar advantage over the whole rollout.

\begin{table}[t]
    \centering
    \caption{Component ablation of SAPO on three Amazon-review datasets. The per-step labels correspond to the reasoning-step advantage and reasoning-step match reward defined at SID-token positions. We additionally report the reasoning activation performance on stage 2 for reference.
    The best result in each column is \textbf{bolded} and the second best is \underline{underlined}.
    % \zy{Alternatively, should we switch to a bar chart?}
    }
    \label{tab:ablation}
    \resizebox{\textwidth}{!}{%
    {\small
    \setlength{\tabcolsep}{3pt}
    \begin{tabular}{@{}l cc cc cc cc cc cc@{}}
    \toprule
    & \multicolumn{4}{c}{\textbf{\texttt{Office-Products}}}
    & \multicolumn{4}{c}{\textbf{\texttt{Video-Games}}}
    & \multicolumn{4}{c}{\textbf{\texttt{Industrial-and-Scientific}}} \\
    \cmidrule(lr){2-5} \cmidrule(lr){6-9} \cmidrule(lr){10-13}
    \textbf{Method}
    & R@5 & N@5 & R@10 & N@10
    & R@5 & N@5 & R@10 & N@10
    & R@5 & N@5 & R@10 & N@10 \\
    \midrule
    Reasoning Activation
    & 0.1143 & 0.0933 & 0.1410 & 0.1020
    & 0.0577 & 0.0375 & 0.0926 & 0.0466
    & 0.1032 & 0.0812 & 0.1299 & 0.0899 \\
    \textbf{Full SAPO}
    & \textbf{0.1362} & \textbf{0.1141} & \textbf{0.1643} & \textbf{0.1245}
    & \underline{0.0620} & \textbf{0.0413} & \textbf{0.0986} & \textbf{0.0529}
    & \textbf{0.1176} & \textbf{0.0931} & \textbf{0.1416} & \textbf{0.1012} \\
    w/o per-step advantage
    & 0.1229 & 0.1008 & 0.1494 & 0.1093
    & \textbf{0.0625} & \underline{0.0411} & \underline{0.0979} & \underline{0.0528}
    & \underline{0.1171} & \underline{0.0916} & \underline{0.1401} & \underline{0.0990} \\
    w/o per-step match reward
    & \underline{0.1268} & \underline{0.1029} & \underline{0.1535} & \underline{0.1115}
    & 0.0591 & 0.0385 & 0.0941 & 0.0498
    & 0.1149 & 0.0913 & 0.1389 & \underline{0.0990} \\
    w/o both \textit{(pure GRPO)}
    & 0.1235 & 0.1001 & 0.1490 & 0.1083
    & 0.0290 & 0.0209 & 0.0427 & 0.0256
    & 0.1032 & 0.0846 & 0.1207 & 0.0903 \\
    \bottomrule
    \end{tabular}}
    }
    \vspace{-3mm}
\end{table}

\subsection{Ablation Study (RQ3)}
\label{sec:ablation}

To address \textbf{RQ3}, we run component ablations on the three categories and report all completed entries in Table~\ref{tab:ablation}; single-category training diagnostics use \texttt{Industrial-and-Scientific} as the representative setting.
Starting from full SAPO, we retrain Stage~3 after removing one credit-assignment ingredient at a time.
Specifically, we consider the following three variants:
\textit{\textbf{(i) w/o per-step group-relative advantage}}, removes reasoning-step advantage localization by replacing $\hat{A}_{i,k}$ with a single sequence-level advantage while keeping the per-step match reward and step-normalized token assignment.
\textit{\textbf{(ii) w/o per-step match reward}}, removes dense SID-token feedback by replacing the per-step match reward with the final SID-tuple exact-match reward shared by all steps.
\textit{\textbf{(iii) w/o both (pure GRPO)}}, removes both ingredients and broadcasts one rollout-level advantage to all response tokens.

\setlength{\columnsep}{12pt}
\begin{wrapfigure}[12]{r}{0.6\textwidth}
\vspace{-0.55cm}
  \begin{center}
    \includegraphics[width=0.29\textwidth]{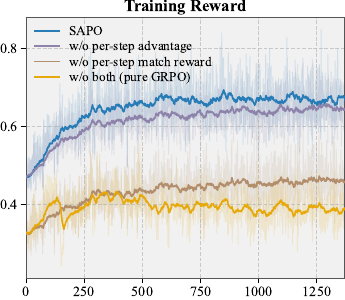}
    \hfill
    \includegraphics[width=0.294\textwidth]{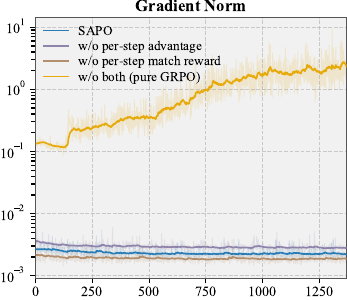}
    \vspace{-0.3cm}
  \end{center}
  \caption{Ablation diagnostics: reward and gradient norm (log$_{10}$ scale) on the Industrial \& Scientific dataset.}
  \vspace{-0.2cm}
  \label{fig:ablation-diagnostics}
\end{wrapfigure}

\noindent
Table~\ref{tab:ablation} shows that SAPO's two ingredients are complementary rather than interchangeable.
Across the completed ablations, full SAPO remains the strongest setting, while removing either per-step group-relative advantages or per-step match rewards consistently weakens ranking quality.
The drops are moderate for the single-component ablations but larger when both ingredients are removed, indicating that dense SID-token feedback and reasoning-step advantage localization address different parts of the credit-assignment problem.

\noindent
Beyond final ranking metrics, Figure~\ref{fig:ablation-diagnostics} clarifies why this complementarity matters during RL.
The single-component ablations do not exhibit the gradient-norm explosion of pure GRPO, suggesting that either retaining the reasoning-step match reward or retaining the step-aligned surrogate is enough to avoid the worst rollout-level update amplification.
However, their reward trajectories and final ranking metrics remain below full SAPO, which suggests that stability alone is not sufficient: the best performance requires both a denser SID-token match signal and reasoning-step advantage assignment.
Thus, SAPO improves recommendation quality not by adding an isolated stabilizer, but by preserving a fully step-aligned credit structure.

\begin{figure}[t]
    \centering
    \newcommand{\casesid}[1]{\textcolor{red!70!black}{\texttt{#1}}}
    \begin{tcolorbox}[
        enhanced,
        width=\linewidth,
        colback=gray!4,
        colframe=gray!55,
        boxrule=0.4pt,
        arc=1mm,
        left=3pt,
        right=3pt,
        top=3pt,
        bottom=3pt,
        fontupper=\scriptsize
    ]
    \textbf{Shared prompt history.}
    \casesid{<a\_15><b\_155><c\_49>} (Industrial \& Scientific category);
    \casesid{<a\_91><b\_29><c\_123>} (Kamo 3D-printer extruder brass nozzle);
    \casesid{<a\_73><b\_244><c\_79>} (pin-header and Dupont connector kit);
    \casesid{<a\_158><b\_206><c\_3>} (black PLA 3D-printer filament);
    \casesid{<a\_159><b\_179><c\_39>} (NEMA17 flexible coupling for CNC);
    \casesid{<a\_30><b\_192><c\_4>} (NEMA17 stepper motor for DIY CNC).\\[-1pt]
    \textbf{Target.}
    \casesid{<a\_158><b\_89><c\_233>}:
    DRV8825 StepStick stepper motor drivers for 3D-printer electronics, CNC machines, or robotics.
    \end{tcolorbox}
    \vspace{2pt}
    \begin{minipage}[t]{0.56\linewidth}
    \begin{tcolorbox}[
        enhanced,
        width=\linewidth,
        colback=red!3,
        colframe=red!45!black,
        colbacktitle=red!12,
        coltitle=black,
        title=\textbf{Outcome GRPO},
        fonttitle=\scriptsize,
        fontupper=\tiny,
        boxrule=0.4pt,
        arc=1mm,
        left=3pt,
        right=3pt,
        top=3pt,
        bottom=3pt
    ]
    \textbf{Thinking block 1.}
    I notice the user has a strong interest in 3D printing and related hardware, as seen in their engagement with \casesid{<a\_91><b\_29><c\_123>}, \casesid{<a\_158><b\_206><c\_3>}, and \casesid{<a\_159><b\_179><c\_39>}. They also explored electronics components like \casesid{<a\_73><b\_244><c\_79>} and \casesid{<a\_30><b\_192><c\_4>}, indicating a hands-on approach to building and customizing devices. The inclusion of \casesid{<a\_15><b\_155><c\_49>} suggests an appreciation for precision and quality in their projects.\\[1pt]
    \textbf{Thinking block 2.}
    Given the variety of items--ranging from filament and extruder parts to electronic connectors and stepper motors--the user appears to enjoy assembling, upgrading, and maintaining 3D printers and related equipment. Their pattern of selecting both consumables and mechanical components points to a methodical, project-oriented mindset, possibly for DIY or small-scale manufacturing. I would expect the user to continue exploring compatible accessories, upgrade parts, or additional 3D printing materials that support their technical interests.\\[1pt]
    \textbf{Thinking block 3.}
    Overall, the user may enjoy further items related to 3D printer hardware, electronics, or precision tools that enable more advanced or varied project builds. I will recommend the next item as \casesid{<a\_91><b\_105><c\_93>}, which aligns with their ongoing interest in high-quality, compatible components for 3D printing.\\[2pt]
    \textbf{Prediction.}
    \casesid{<a\_91><b\_105><c\_93>}:
    extruder nozzle and tube kit.
    \textit{\textbf{Wrong; the target is not in top-10.}}
    \end{tcolorbox}
    \end{minipage}\hfill
    \begin{minipage}[t]{0.42\linewidth}
    \begin{tcolorbox}[
        enhanced,
        width=\linewidth,
        colback=blue!3,
        colframe=blue!50!black,
        colbacktitle=blue!12,
        coltitle=black,
        title=\textbf{SAPO},
        fonttitle=\scriptsize,
        fontupper=\tiny,
        boxrule=0.4pt,
        arc=1mm,
        left=3pt,
        right=3pt,
        top=3pt,
        bottom=3pt
    ]
    \textbf{Thinking block 1.}
    Examining the user's interaction history, I see a clear pattern of engagement with components and accessories central to 3D printing and CNC projects. The user consistently selects items such as \casesid{<a\_91><b\_29><c\_123>}, \casesid{<a\_158><b\_206><c\_3>}, and \casesid{<a\_159><b\_179><c\_39>}, which are directly related to filament, extrusion, and mechanical motion--key elements in building or maintaining 3D printers.\\[1pt]
    \textbf{Thinking block 2.}
    The inclusion of \casesid{<a\_73><b\_244><c\_79>} and \casesid{<a\_30><b\_192><c\_4>} suggests an interest in electronics and stepper motor control, indicating a hands-on approach to hardware assembly and customization. The user's choices reflect a preference for reliable, compatible, and high-quality parts, often in bulk or multi-pack formats, which points to ongoing or large-scale projects.\\[1pt]
    \textbf{Thinking block 3.}
    Based on these patterns, the user may enjoy exploring further 3D printer upgrades, motion control components, or specialized accessories that enhance printer performance and expand functionality.\\[2pt]
    \textbf{Prediction.}
    \casesid{<a\_158><b\_89><c\_233>}:
    DRV8825 StepStick stepper motor drivers.
    \textit{\textbf{Correct.}}
    \end{tcolorbox}
    \end{minipage}
    \caption{
        Case study on \texttt{Industrial-and-Scientific}.
        Both methods receive the same prompt, with SID tokens highlighted in red.
        Outcome GRPO follows the general user interest but misses the target, whereas SAPO identifies more discriminative evidence and predicts the correct SID.
    }
    \label{fig:case-study}
    \vspace{-3mm}
\end{figure}

\subsection{Case Study}
\label{sec:case-study}

To complement the quantitative results, we provide qualitative examples showing how SAPO shapes reasoning behavior.
Figure~\ref{fig:case-study} compares outcome-reward training and SAPO on the same \texttt{Industrial-and-Scientific} test prompt.
GRPO produces a plausible near miss: it captures the broad 3D-printing domain but treats the history as generic printer parts, leading to another accessory.
The key missing cue is the transition from filament and extrusion components to NEMA17 coupling and stepper motor, which indicates motion control.
SAPO preserves this finer-grained cue by reasoning about electronics and stepper motor control, and ranks the motor-driver SID first.

Because both models receive the same prompt and start from the same Stage 2 reasoning-activation checkpoint, the contrast primarily reflects the effect of the Stage 3 credit-assignment objective.
The outcome reward only distinguishes complete SID-tuple success from failure, so this broadly relevant but wrong accessory receives the same scalar penalty as a completely unrelated item.
SAPO instead assigns feedback to reasoning steps using SID-token correctness, making the update sensitive to which SID-token prediction is correct and which refinement fails.
The case study therefore complements the training-dynamics results: reasoning-step credit assignment improves stability while also encouraging reasoning that preserves the evidence needed for fine-grained SID selection.
We provide additional case studies, including examples from \texttt{Video-Games} and \texttt{Office-Products}, in Appendix~\ref{app:case-study}.

%% file: sections/06_conclusion.tex
% =====================================================================
\section{Conclusion}
\label{sec:conclusion}

We introduced \textbf{SAPO}, a step-aligned reinforcement-learning method for reasoning-based generative recommenders that decode items over a $K$-level SID codebook.
The central observation is that outcome-reward RL uses the wrong action granularity for this setting: it obscures which SID-token prediction caused the mismatch and can penalize matched SID-token predictions together with erroneous refinements.
SAPO resolves this mismatch by treating each reasoning step $(\tau^{(k)}, s^{(k)})$ as the action unit and deriving from that choice a per-step match reward, a per-step group-relative advantage, and a step-aligned surrogate with step-normalized token aggregation.
Across the evaluated settings, SAPO materially stabilizes training dynamics and yields competitive-to-improved Recall and NDCG relative to outcome-reward baselines, with the clearest gains where fine-grained credit matters most.
Beyond SIDs, our analysis suggests that whenever a generation task admits a natural hierarchical decomposition, such as a codebook, a tool-call schema, or a structured answer format, the hierarchy itself can serve as a fundamental source of per-step supervision for reinforcement learning, without any learned reward model.

%% file: sections/05_related_work.tex
% =====================================================================
\section{Related Work}
\label{sec:related}

\subsection{Generative Recommendation with Semantic Identifiers}

% [Polish] Original: Generative recommendation has emerged as an alternative to discriminative matching, in which each item is represented as a short sequence of discrete tokens and the recommender is trained as a language model over these tokens~\citep{rajput2023recommender, deng2025onerec, lee2022autoregressive}.
Generative recommendation has emerged as an alternative to discriminative matching: each item is represented as a short sequence of discrete tokens, and the recommender is trained as a language model over these tokens~\citep{rajput2023recommender, deng2025onerec, lee2022autoregressive}.
% [Polish] Original: A central design choice is the tokenization scheme: residual-quantization VAEs produce hierarchical \emph{semantic identifiers} that capture coarse-to-fine item semantics, which both improves generalization to unseen items and enables compact generation~\citep{rajput2023recommender, lee2022autoregressive}.
A central design choice is the tokenization scheme: residual-quantization VAEs produce hierarchical \emph{semantic identifiers} (SIDs) that capture coarse-to-fine item semantics, which both improves generalization to unseen items and enables compact generation~\citep{rajput2023recommender, lee2022autoregressive}.
% [Polish] Original: Our work inherits this representation and shows how the $K$-level structure of SIDs provides a \emph{natural scaffold for step-level credit assignment}, an observation that, to the best of our knowledge, has not been leveraged by prior work on generative recommenders, including recent reasoning-based systems such as SIDReasoner~\citep{he2026reasoning} that still rely on a scalar outcome reward.
Our work inherits this representation and shows how the $K$-level structure of SIDs provides a natural scaffold for step-level credit assignment, an observation that, to the best of our knowledge, has not been leveraged by prior work on generative recommenders, including recent reasoning-based systems such as SIDReasoner~\citep{he2026reasoning} that still rely on a scalar outcome reward.

\subsection{Reasoning and Reinforcement Learning for Large-Language-Model (LLM)-based
Recommendation}

% [Polish] Original: Following the success of reinforcement learning from verifiable rewards on math and code~\citep{guo2025deepseek, lambert2024tulu}, a growing body of work applies group-relative policy-optimization algorithms~\citep{shao2024deepseekmath, yu2025dapo, liu2025understanding, zheng2025group} to LLM-based recommenders.
Following the success of reinforcement learning from verifiable rewards on math and code~\citep{guo2025deepseek, lambert2024tulu}, several recent systems apply group-relative policy-optimization algorithms~\citep{shao2024deepseekmath, yu2025dapo, liu2025understanding, zheng2025group} to LLM-based recommenders.
% [Polish] Original: Most closely related to us, SIDReasoner~\citep{he2026reasoning}, OneRec-Think, R$^2$ec, ReaRec, and RecCoT~\citep{liu2025onerec, you2025r, tang2025think, yang2025reccot} augment generative recommenders with chain-of-thought reasoning and train them with a scalar outcome reward fed into GRPO or one of its close variants.
The systems most closely related to ours include SIDReasoner~\citep{he2026reasoning}, OneRec-Think, R$^2$ec, ReaRec, and RecCoT~\citep{liu2025onerec, you2025r, tang2025think, yang2025reccot}, all of which augment generative recommenders with chain-of-thought reasoning and train them with a scalar outcome reward fed into GRPO or a close variant.
% [Polish] Original: Our analysis in Section~\ref{sec:action-granularity} identifies a structural misalignment between this outcome-reward recipe and $K$-level SID decoding that, empirically, causes these methods to suffer from high-variance gradient estimates whose symptoms include length hacking, drift, and reward collapse, and is independent of which underlying policy-optimization algorithm one chooses.
% \textcolor{red}{Our analysis in Section~\ref{sec:action-granularity} identifies a structural misalignment between this outcome-reward recipe and $K$-level SID decoding: rollout-level credit cannot localize failed SID refinements, which empirically manifests as length drift, degraded format compliance, and reward collapse, independent of the specific policy-optimization algorithm.}
% [Polish] Original: SAPO directly addresses this misalignment by assigning rewards at the per-step granularity, which can be viewed as a form of process-reward RL~\citep{lightman2023let, uesato2022solving} that obtains per-step supervision \emph{for free} from the codebook hierarchy, without any human or learned reward model.
SAPO directly addresses this misalignment by assigning rewards at the per-step granularity; it can be viewed as a form of process-reward RL~\citep{lightman2023let, uesato2022solving} that obtains per-step supervision directly from the codebook hierarchy, without any human-annotated or learned reward model.
% [Polish] Original: Concurrently, Rank-GRPO~\citep{zhu2025rank} proposes rank-level credit assignment for conversational recommenders, and our work is complementary in that it targets the \emph{codebook hierarchy} of SID decoding rather than the \emph{positional hierarchy} of a recommendation list.
Concurrently, Rank-GRPO~\citep{zhu2025rank} proposes rank-level credit assignment for conversational recommenders; our work is complementary in that it targets the \emph{codebook hierarchy} of SID decoding rather than the \emph{positional hierarchy} of a recommendation list.
% [Polish] Original: A complementary diagnosis is provided by~\citet{zhang2026why}, who identify an \emph{inference-time} failure mode of reasoning-based generative recommenders: long chain-of-thought traces induce a textual-inertia bias in which a generic-text subspace dominates inference and dilutes the SID evidence needed to decode the correct item.
A complementary diagnosis is provided by~\citet{zhang2026why}, who identify an \emph{inference-time} failure mode of reasoning-based generative recommenders: long chain-of-thought traces induce a textual-inertia bias in which a generic-text subspace dominates inference and dilutes the SID evidence needed to decode the correct item.
% [Polish] Original: Our analysis targets a different, \emph{training-time} pathology, namely the gradient-variance instability and indiscriminate credit assignment of Section~\ref{sec:action-granularity}, but the two are consistent: an outcome-reward gradient budget indiscriminately spread over reasoning tokens reinforces exactly the reasoning-token distribution whose dominance~\citet{zhang2026why} observe at inference time, whereas SAPO concentrates the budget on the SID decision tokens that the reward can verify.
{Our analysis targets a different, \emph{training-time} failure mode: rollout-level credit diffused over reasoning tokens and SID-token predictions.}
The two diagnoses are nevertheless consistent.
{An outcome-reward gradient budget spread indiscriminately over reasoning tokens reinforces the same reasoning-token distribution whose dominance~\citet{zhang2026why} observe at inference time, whereas SAPO concentrates the budget on the verifiable SID tokens.}

%% file: sections/07_appendix.tex
%%%%%%%%%%%%%%%%%%%%%%%%%%%%%%%%%%%%%%%%%%%%%%%%%%%%%%%%%%%%

\section{Limitations}
\label{app:limitations}

We highlight several directions for extending SAPO beyond the present study.
\emph{Evaluation scale and diversity.}
Our experiments focus on three English Amazon Reviews categories with a fixed $K\!=\!3$ SID hierarchy.
Future work can evaluate SAPO on larger catalogs, deeper or less balanced SID hierarchies, cross-lingual item descriptions, and domains with different interaction dynamics.
It would also be useful to study how step-aligned credit assignment behaves under different initialization regimes, including weaker Stage~2 checkpoints.
\emph{Richer feedback settings.}
This paper uses ground-truth SID tuples to obtain clean per-level verifiable feedback.
A natural next step is to extend SAPO to settings with multiple acceptable next items, noisy implicit feedback, or production-oriented click and conversion signals.
These settings may require combining SID-level match rewards with calibrated relevance, diversity, or user-satisfaction objectives.
\emph{System and decoding variants.}
Our implementation uses a blocked layout in which the policy generates all $K$ thinking blocks before decoding the contiguous SID block.
Future work can explore alternative response layouts, decoding constraints, and reward schedules, including interleaved reasoning--SID generation, adaptive step weighting, and deeper integration with trie-constrained decoding.
These extensions may further improve efficiency, controllability, and deployment flexibility.

\section{Broader Impacts}
\label{app:broader-impacts}

SAPO improves the training of generative recommenders that decode items as hierarchical SIDs.
On the positive side, per-step credit assignment reduces the tendency of outcome-reward training to inflate reasoning length and collapse reward, which translates into more sample-efficient RL fine-tuning and more interpretable (coarse-to-fine) reasoning traces that practitioners can audit.
On the negative side, any improvement to the accuracy of personalized recommenders carries the standard risks of the broader generative-recommendation setting: filter-bubble reinforcement, preference manipulation by adversarial content producers, and information asymmetry between platforms and users.
Our contribution targets the RL credit-assignment layer and does not introduce new data sources or deployment pathways beyond those already common in the literature, so we do not identify a novel direct path to harm; we nonetheless encourage downstream practitioners who build on SAPO to pair it with diversity-aware reward shaping and with the auditing tools proposed in the fairness and filter-bubble literature.

\section{Theoretical Proofs and Discussions}
\label{app:proofs}

This appendix formalizes the objective-consistency claim used in Section~\ref{sec:reward-design}.
Appendix~\ref{app:proofs-lcp} proves that the cumulative match component of SAPO's per-step reward equals $\alpha$ times the number of matched SID positions, preserves the exact-match optimum under realizability, and becomes an aligned marginal refinement when exact matching is not realizable.
Appendix~\ref{app:decoding-order} formalizes the implementation decoding order in Eq.~\eqref{eq:response-impl} and explains why the paired unit $(\tau^{(k)},s^{(k)})$ remains the relevant credit-assignment unit.
These results do not modify the SAPO method of Section~\ref{sec:method}; they only formalize the consistency claim and the implementation details behind step-aligned credit assignment.

\subsection{Formal Statement and Proof of the Consistency Claim}
\label{app:proofs-lcp}

For a rollout $y_i$, let
\begin{equation}
\phi(\hat v_i)=(s_i^{(1)},\ldots,s_i^{(K)}),
\qquad
\phi(v_i^\star)=(s_i^{\mathrm{gt},(1)},\ldots,s_i^{\mathrm{gt},(K)})
\end{equation}
denote the generated and ground-truth SID tuples.
Define the number of matched SID positions as
\begin{equation}
\#\mathrm{Match}\big(\phi(\hat v_i),\phi(v_i^\star)\big)
\coloneqq
\sum_{k=1}^K
\mathbbm{1}\!\left[s_i^{(k)}=s_i^{\mathrm{gt},(k)}\right].
\end{equation}
Recall that SAPO's per-step reward in Eq.~\eqref{eq:step-reward} contains a match component
\begin{equation}
m_{i,k}
\coloneqq
\alpha\,\mathbbm{1}\!\left[s_i^{(k)}=s_i^{\mathrm{gt},(k)}\right].
\end{equation}
We analyze this match component separately from the final-step format bonus.

\begin{proposition}[Per-step match rewards and outcome optima]
\label{prop:lcp}
Let $\Pi$ be a policy class over rollouts, and define
\begin{equation}
J_{\mathrm{out}}(\pi)
\coloneqq
\mathbb{E}_{x,y\sim\pi(\cdot\mid x)}
\left[
\mathbbm{1}\!\left(\phi(\hat v)=\phi(v^\star)\right)
\right],
\end{equation}
and
\begin{equation}
J_{\mathrm{match}}(\pi)
\coloneqq
\mathbb{E}_{x,y\sim\pi(\cdot\mid x)}
\left[
\sum_{k=1}^K m_k
\right].
\end{equation}
Then the following statements hold.

\begin{enumerate}[leftmargin=*,itemsep=2pt,topsep=2pt]
    \item \textbf{Identity.}
    For every rollout $y_i$,
    \begin{equation}
    \sum_{k=1}^K m_{i,k}
    =
    \alpha\,
    \#\mathrm{Match}\big(\phi(\hat v_i),\phi(v_i^\star)\big).
    \end{equation}

    \item \textbf{Optimum preservation under realizability.}
    Suppose there exists a policy $\pi^\star\in\Pi$ such that
    \begin{equation}
    \Pr_{y\sim\pi^\star(\cdot\mid x)}
    \left[
    \phi(\hat v)=\phi(v^\star)
    \right]
    =
    1
    \qquad
    \text{for every } x\in\mathcal{D}_{\mathrm{RL}}.
    \end{equation}
    Then
    \begin{equation}
    \arg\max_{\pi\in\Pi} J_{\mathrm{out}}(\pi)
    =
    \arg\max_{\pi\in\Pi} J_{\mathrm{match}}(\pi).
    \end{equation}

    \item \textbf{Aligned refinement under non-realizability.}
    Without the realizability assumption,
    \begin{equation}
    J_{\mathrm{match}}(\pi)
    =
    \alpha
    \sum_{k=1}^K R_k(\pi),
    \end{equation}
    where
    \begin{equation}
    R_k(\pi)
    \coloneqq
    \Pr_{x,y\sim\pi(\cdot\mid x)}
    \left[
    s^{(k)}=s^{\mathrm{gt},(k)}
    \right]
    \end{equation}
    is the level-$k$ marginal match probability.
    By contrast,
    \begin{equation}
    J_{\mathrm{out}}(\pi)
    =
    \Pr_{x,y\sim\pi(\cdot\mid x)}
    \left[
    \bigwedge_{k=1}^K
    s^{(k)}=s^{\mathrm{gt},(k)}
    \right],
    \end{equation}
    the joint probability that all SID positions match.
    Thus maximizing $J_{\mathrm{match}}$ maximizes the uniform sum of level-wise marginal match probabilities, providing an aligned refinement of the single joint exact-match objective.
\end{enumerate}
\end{proposition}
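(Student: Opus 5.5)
Part 1 needs no ingenuity: summing the definition $m_{i,k}=\alpha\,\mathbbm{1}[s_i^{(k)}=s_i^{\mathrm{gt},(k)}]$ over $k\in[K]$ and pulling $\alpha$ out gives $\alpha$ times the number of matched positions, which is exactly the definition of $\#\mathrm{Match}$; this is Eq.~\eqref{eq:lcp-identity}. Part 3 then follows by taking expectations in the identity, using linearity of expectation, and noting that $\mathbb{E}[\mathbbm{1}[s^{(k)}=s^{\mathrm{gt},(k)}]]=R_k(\pi)$. The contrast with $J_{\mathrm{out}}$ is simply the observation that $\mathbbm{1}(\phi(\hat v)=\phi(v^\star))=\prod_k \mathbbm{1}[s^{(k)}=s^{\mathrm{gt},(k)}]$, because the tuple matches iff all coordinates match, so its expectation is the joint probability. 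I would also record the pointwise sandwich
\begin{equation}
\alpha K\,\mathbbm{1}\!\left(\phi(\hat v)=\phi(v^\star)\right)\;\le\;\sum_{k=1}^K m_k\;\le\;\alpha K,
\end{equation}
with equality on the right iff the tuple matches exactly. This is the key tool for Part 2. Throughout I take $\alpha>0$.

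For Part 2, the plan is to compute both optimal values and characterize both argmax sets as the same set $\Pi^\star$, namely the policies that produce an exact match with probability one for almost every $x$ drawn from $\mathcal{D}_{\mathrm{RL}}$. The steps are as follows.

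First, $J_{\mathrm{out}}\le 1$ and $J_{\mathrm{match}}\le \alpha K$ for every $\pi$, by the sandwich.

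Second, the realizable $\pi^\star$ attains both bounds. For $J_{\mathrm{out}}$ this is immediate. For $J_{\mathrm{match}}$, an exact tuple match forces every coordinate to match, so $\sum_k m_k=\alpha K$ almost surely. Hence $\max J_{\mathrm{out}}=1$ and $\max J_{\mathrm{match}}=\alpha K$, and both argmax sets are nonempty.

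Third, $\pi$ attains $J_{\mathrm{out}}=1$ iff the exact-match event has probability one. Likewise, $\pi$ attains $J_{\mathrm{match}}=\alpha K$ iff $\sum_k m_k=\alpha K$ almost surely, since the integrand is bounded above by $\alpha K$ and a nonnegative gap with zero expectation vanishes almost surely. That in turn holds iff every coordinate matches almost surely, i.e.\ iff the exact-match event has probability one. Both argmax sets therefore equal $\Pi^\star$.

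The only real subtlety, and the step I expect to be the main obstacle, is measure-theoretic bookkeeping in the third step. One issue is whether the expectation over $x$ is with respect to the empirical distribution on $\mathcal{D}_{\mathrm{RL}}$ (then "almost every $x$" means every $x$, matching the hypothesis as stated). The other is the degenerate case where $\phi(v^\star)$ depends on $x$ in a way that makes the ground truth random given $x$. I would handle this by fixing the convention that $v^\star$ is a deterministic function of $x$, so the per-prompt target is well defined. I would also remark that without realizability the two argmax sets can genuinely differ: maximizing a sum of marginals need not maximize the joint probability. This is why Part 3 is phrased as an aligned refinement rather than an equivalence.
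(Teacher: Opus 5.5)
Your proposal is correct and follows the paper's proof essentially step for step: Part~1 by summing the level-wise indicators, Part~2 by showing both objectives are bounded (by $1$ and $\alpha K$), attained by the realizable $\pi^\star$, and maximized exactly by the policies that produce the ground-truth tuple almost surely, and Part~3 by linearity of expectation. Your explicit assumption $\alpha>0$ and your observation that a nonnegative gap with zero expectation vanishes almost surely justify a step the paper asserts without argument.
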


\begin{proof}
\textbf{Part 1.}
By definition,
\begin{equation}
\frac{m_{i,k}}{\alpha}
=
\mathbbm{1}\!\left[s_i^{(k)}=s_i^{\mathrm{gt},(k)}\right].
\end{equation}
Summing over $k$ gives the number of SID positions on which the generated and ground-truth tuples agree:
\begin{equation}
\sum_{k=1}^K \frac{m_{i,k}}{\alpha}
=
\sum_{k=1}^K
\mathbbm{1}\!\left[s_i^{(k)}=s_i^{\mathrm{gt},(k)}\right]
=
\#\mathrm{Match}\big(\phi(\hat v_i),\phi(v_i^\star)\big).
\end{equation}
Multiplying both sides by $\alpha$ proves the identity.

\textbf{Part 2.}
For any policy $\pi$, the outcome objective satisfies
\begin{equation}
0\le J_{\mathrm{out}}(\pi)\le 1.
\end{equation}
Under the realizability assumption, the upper bound is attainable. Therefore, a policy maximizes $J_{\mathrm{out}}$ if and only if it generates the ground-truth SID tuple almost surely for every prompt.

For the cumulative match reward, Part~1 implies
\begin{equation}
0
\le
\sum_{k=1}^K m_k
\le
K\alpha.
\end{equation}
The upper bound $K\alpha$ is attained for a rollout if and only if
\begin{equation}
s^{(k)}=s^{\mathrm{gt},(k)}
\qquad
\text{for all } k\in[K],
\end{equation}
which is equivalent to
\begin{equation}
\phi(\hat v)=\phi(v^\star).
\end{equation}
Since the quantizer $\phi$ is injective on the catalog, this is also equivalent to exact item recovery.
Thus a policy maximizes $J_{\mathrm{match}}$ if and only if it generates the ground-truth SID tuple almost surely for every prompt.
The maximizers of $J_{\mathrm{out}}$ and $J_{\mathrm{match}}$ are therefore identical over $\Pi$.

\textbf{Part 3.}
By Part~1 and linearity of expectation,
\begin{equation}
J_{\mathrm{match}}(\pi)
=
\mathbb{E}_{x,y\sim\pi(\cdot\mid x)}
\left[
\sum_{k=1}^K m_k
\right]
=
\alpha
\sum_{k=1}^K
\Pr_{x,y\sim\pi(\cdot\mid x)}
\left[
s^{(k)}=s^{\mathrm{gt},(k)}
\right]
=
\alpha\sum_{k=1}^K R_k(\pi).
\end{equation}
In contrast,
\begin{equation}
J_{\mathrm{out}}(\pi)
=
\Pr_{x,y\sim\pi(\cdot\mid x)}
\left[
\bigwedge_{k=1}^K
s^{(k)}=s^{\mathrm{gt},(k)}
\right].
\end{equation}
This joint exact-match probability is bounded above by each marginal:
\begin{equation}
J_{\mathrm{out}}(\pi)
\le
R_k(\pi),
\qquad
\forall k\in[K],
\end{equation}
and hence
\begin{equation}
J_{\mathrm{out}}(\pi)\le \min_{k\in[K]}R_k(\pi).
\end{equation}
Therefore, two policies can have the same exact-match probability while differing in their level-wise marginals.
The cumulative match reward distinguishes such policies by assigning credit to matched SID positions, and is therefore an aligned marginal refinement of the outcome objective.
\end{proof}

\noindent\textbf{Remark (informative rollout groups).}
The same distinction appears in the group-relative advantages used during RL.
Outcome-reward GRPO produces nonzero group-relative contrast only when the exact-match indicators differ across rollouts in the same group.
Equivalently, the group must contain at least two rollouts with different values of
\begin{equation}
\mathbbm{1}\!\left[\phi(\hat v)=\phi(v^\star)\right].
\end{equation}
SAPO instead computes reasoning-step advantages from the level-wise indicators
\begin{equation}
\mathbbm{1}\!\left[s^{(k)}=s^{\mathrm{gt},(k)}\right],
\qquad k\in[K].
\end{equation}
Thus, for a fixed level $k$, SAPO has nonzero step-level contrast whenever these level-$k$ indicators differ across rollouts.
This condition is strictly weaker than requiring different exact-match indicators.
For example, if all rollouts miss the final item but some match the first SID level while others do not, the outcome advantage vanishes for the whole group, whereas SAPO still obtains a nonzero advantage at level~1.
More generally, whenever sampled rollouts disagree on any SID position, SAPO can provide a step-level training signal even if every rollout is an exact-match failure.
Therefore, step-aligned advantages are informative on a strictly larger set of rollout groups than outcome-level advantages.

\section{Implementation Layout and Reasoning-Step Pairing}
\label{app:decoding-order}

The policy we actually train and release follows the implementation layout of Eq.~\eqref{eq:response-impl}, in which all $K$ thinking blocks are generated first and the $K$ SID tokens are decoded as a contiguous block.
Under this layout, SAPO defines reasoning step~$k$ by pairing the $k$-th thinking block $\tau^{(k)}$ with the $k$-th SID token $s^{(k)}$.
This pairing is used only to assign credit; it is not an alternative autoregressive factorization of the model.

The distinction matters because under Eq.~\eqref{eq:response-impl} every $s^{(k)}$ is conditioned on the full reasoning trace, i.e., all $K$ thinking blocks, and the previously generated SID tokens, not merely on a local step fragment.
We adopt the blocked layout because (a)~the contiguous SID block enables trie-based constrained decoding and beam search directly over the $K$-level codebook while reusing the KV cache across SID positions, and (b)~later thinking blocks can refine earlier ones before any SID is committed.

SAPO therefore never relies on an alternative decode order.
Per-step match rewards are placed on SID-token positions (Section~\ref{sec:reward-design}), and per-step advantages are computed from per-step group statistics (Section~\ref{sec:step-aligned-grpo}); both depend only on the extracted reasoning steps $(\tau^{(k)}, s^{(k)})$ and the corresponding level-$k$ match indicators.
All definitions and derivations in Sections~\ref{sec:action-granularity}--\ref{sec:step-aligned-grpo} and in Appendix~\ref{app:proofs} therefore remain valid under the implementation layout, with $|y^{(k)}|=|\tau^{(k)}|+1$ denoting the size of the reasoning step.

\section{Baseline Descriptions}
\label{app:baselines}

% We briefly describe each baseline used in Section~\ref{sec:baselines}.

\noindent\textbf{Traditional sequential recommenders.}
\textbf{GRU4Rec}~\citep{hidasi2015session} models user sessions with gated recurrent units (GRUs) and ranks items by session-parallel mini-batch training.
\textbf{SASRec}~\citep{kang2018self} applies a left-to-right Transformer to the interaction sequence, predicting the next item with a causal self-attention mechanism.
% \textbf{BERT4Rec}~\citep{sun2019bert4rec} uses a bidirectional Transformer with a cloze-style (masked-item) training objective, enabling it to attend to both past and future context during training.
\textbf{Caser}~\citep{tang2018personalized} treats the interaction sequence as a matrix and applies horizontal and vertical convolutional filters to capture point-level and union-level sequential patterns.

\noindent\textbf{Generative recommenders without reasoning.}
\textbf{TIGER}~\citep{rajput2023recommender} represents items as sequences of SIDs produced by an RQ-VAE and trains a Transformer to autoregressively generate SID tuples for the next item.
\textbf{HSTU}~\citep{zhai2024actions} is a large-scale generative retrieval model that encodes heterogeneous user actions with a hierarchical sequence Transformer.
\textbf{LCRec}~\citep{zheng2024adapting} aligns a pre-trained language model with collaborative item IDs via a lightweight adapter, enabling generative recommendation without full fine-tuning.
\textbf{LETTER}~\citep{wang2024learnable} learns item tokenization end-to-end with the generative recommender, jointly optimizing the codebook and the decoder.
% \textbf{CoRAB}~\citep{TODO-corab} introduces a contrastive regularization strategy to improve SID-based generative recommendation by reducing codebook collision.

\noindent\textbf{Reasoning-based generative recommenders.}
\textbf{SIDReasoner}~\citep{he2026reasoning} augments a generative recommender with chain-of-thought reasoning and trains Stage~3 with outcome-reward RL; it shares our Stages~1 and~2.
\textbf{OneRec-Think}~\citep{liu2025onerec} extends the OneRec framework with a reasoning module and outcome-reward GRPO training.
\textbf{R$^2$ec}~\citep{you2025r} integrates retrieval-augmented reasoning into generative recommendation and trains with a scalar reward signal.
\textbf{ReaRec}~\citep{tang2025think} uses a think-then-recommend paradigm with reasoning traces distilled from a teacher model, followed by outcome-reward RL fine-tuning.

\section{Dataset Details}
\label{app:data}

\noindent\textbf{Source and preprocessing.}
We use the Amazon Reviews corpus~\citep{ni2019justifying} and follow the 5-core sequential recommendation protocol of~\citet{kang2018self}: users and items with fewer than five interactions are filtered out, and the remaining interactions are sorted chronologically per user.
We restrict the timespan to October~2016--November~2018 so that all three categories share a comparable activity window.
Each user's interaction sequence is split leave-one-out: the last item is held out for test, the second-to-last for validation, and the remaining prefix for training (with all valid (history, target) sliding-window pairs used as training samples).

\noindent\textbf{Per-category statistics.}
Table~\ref{tab:dataset-stats} reports the final split sizes used throughout the paper.
The three categories span a range of catalog sizes (3.5k--3.9k items) and activity densities (avg.\ sequence length 5.96--6.45), providing a consistent but non-trivial test bed for generative recommendation.

\begin{table}[h]
\centering
\caption{Dataset statistics for the three Amazon Reviews categories after 5-core filtering and leave-one-out splitting. `\#Train/\#Valid/\#Test' count (history, target) pairs; `Avg.\ seq.\ len.' is the average number of observed interactions per test user (including the test target).}
\label{tab:dataset-stats}
\small
\begin{tabular}{@{}l rrrrrr@{}}
\toprule
\textbf{Category} & \#Users & \#Items & \#Train & \#Valid & \#Test & Avg.\ seq.\ len. \\
\midrule
\texttt{Office-Products}           & 4{,}866 & 3{,}459 & 38{,}924 & 4{,}866 & 4{,}866 & 5.97 \\
\texttt{Video-Games}               & 6{,}142 & 3{,}858 & 49{,}133 & 6{,}142 & 6{,}142 & 6.45 \\
\texttt{Industrial-and-Scientific} & 4{,}533 & 3{,}686 & 36{,}259 & 4{,}532 & 4{,}533 & 5.96 \\
\bottomrule
\end{tabular}
\end{table}

\noindent\textbf{SID tokenization (RQ-VAE).}
For each category we learn a residual-quantization variational autoencoder (RQ-VAE) over item text embeddings concatenating title, category, brand, and description.
The quantizer has $K{=}3$ codebook levels with 256 entries per level, yielding a 3-tuple SID per item of the form $\langle a_{i_1}\rangle\langle b_{i_2}\rangle\langle c_{i_3}\rangle$ with $i_k\in\{0,\dots,255\}$.
The RQ-VAE is trained per category with mean squared error (MSE) reconstruction loss and commitment loss until all item embeddings receive a unique SID tuple; because $K{=}3$ levels at $256^3$ codes far exceed the catalog size, a single pass suffices.

\noindent\textbf{Codebook utilization.}
The effective codebook utilization (fraction of the 256 entries at each level that appear in at least one item's SID) is reported in Table~\ref{tab:codebook-util}.
Level~1 is sparsely used (19--34\%), reflecting the small number of broad categorical clusters in each dataset, whereas levels~2 and~3 are fully utilized.
{This pattern motivates SAPO's per-step credit assignment: since the prediction difficulty per step is highly asymmetric, a single scalar advantage cannot reflect which SID-token position the policy gets right or wrong.}

\begin{table}[h]
\centering
\caption{Codebook utilization per level across the three categories (\# unique codes observed among 256 entries).}
\label{tab:codebook-util}
\small
\begin{tabular}{@{}l ccc@{}}
\toprule
\textbf{Category} & Level~1 ($a$) & Level~2 ($b$) & Level~3 ($c$) \\
\midrule
\texttt{Office-Products}           & 88 & 256 & 256 \\
\texttt{Video-Games}               & 60 & 256 & 256 \\
\texttt{Industrial-and-Scientific} & 48 & 256 & 256 \\
\bottomrule
\end{tabular}
\end{table}

\section{Dataset and LLM Input Examples}
\label{app:data-examples}

This appendix complements Appendix~\ref{app:data} by providing concrete examples of the raw item/user/interaction records, the learned $K{=}3$ SID tuples, and the natural-language prompt template used by the LLM at training and inference time.

\subsection{Per-Category Dataset Examples}
\label{app:data-examples-records}

Below we show representative records for each of the three Amazon Reviews categories.
All values are drawn from the filtered and RQ-VAE-tokenized corpus described in Appendix~\ref{app:data}.

\noindent\textbf{Office-Products.}
\begin{itemize}[leftmargin=*,itemsep=0pt,topsep=2pt]
    \item \emph{Item example}: \texttt{item\_id=B000GHJL2C}, \texttt{title}=``Sanford Uni-Ball Vision Needle Micro Rollerball Pen'', \texttt{category}=``Office Products $\to$ Pens \& Writing $\to$ Pens'', \texttt{brand}=``Uni-ball'', \texttt{description}=``Smooth, ultra-fine 0.5\,mm tip; archival-quality pigment ink''.
    \item \emph{SID encoding}: $(s^{(1)}, s^{(2)}, s^{(3)}) = (\langle a_{17}\rangle,\langle b_{93}\rangle,\langle c_{41}\rangle)$ (3-level RQ-VAE over text embeddings of title + category + brand + description).
    \item \emph{User example}: \texttt{user\_id=A1P\ldots{}XQ}, \#interactions$=8$ (after 5-core filtering).
    \item \emph{Interaction example (test record)}: \texttt{user\_id=A1P\ldots{}XQ}, \texttt{item\_sid}=\texttt{<a\_17><b\_93><c\_41>}, \texttt{history\_item\_sid}=\texttt{[<a\_3><b\_12><c\_201>, <a\_17><b\_88><c\_14>, <a\_5><b\_220><c\_77>]}, \texttt{timestamp}=\texttt{2018-09-24}.
\end{itemize}

\noindent\textbf{Video-Games.}
\begin{itemize}[leftmargin=*,itemsep=0pt,topsep=2pt]
    \item \emph{Item example}: \texttt{item\_id=B00E46CV88}, \texttt{title}=``Nintendo Selects: Super Mario Galaxy'', \texttt{category}=``Video Games $\to$ Legacy Systems $\to$ Nintendo Systems $\to$ Wii'', \texttt{brand}=``Nintendo'', \texttt{description}=``Platforming adventure featuring spherical worlds and gravity-based puzzles''.
    \item \emph{SID encoding}: $(s^{(1)}, s^{(2)}, s^{(3)}) = (\langle a_{6}\rangle,\langle b_{145}\rangle,\langle c_{182}\rangle)$.
    \item \emph{User example}: \texttt{user\_id=A2K\ldots{}WB}, \#interactions$=11$.
    \item \emph{Interaction example (test record)}: \texttt{user\_id=A2K\ldots{}WB}, \texttt{item\_sid}=\texttt{<a\_6><b\_145><c\_182>}, \texttt{history\_item\_sid}=\texttt{[<a\_6><b\_210><c\_35>, <a\_2><b\_102><c\_19>, <a\_6><b\_145><c\_248>, <a\_31><b\_7><c\_60>]}, \texttt{timestamp}=\texttt{2017-12-02}.
\end{itemize}

\noindent\textbf{Industrial-and-Scientific.}
\begin{itemize}[leftmargin=*,itemsep=0pt,topsep=2pt]
    \item \emph{Item example}: \texttt{item\_id=B003U6XQ12}, \texttt{title}=``Fluke 117 Electricians True RMS Multimeter'', \texttt{category}=``Industrial \& Scientific $\to$ Test \& Measurement $\to$ Multimeters'', \texttt{brand}=``Fluke'', \texttt{description}=``True-RMS voltage and current readings with non-contact voltage detection''.
    \item \emph{SID encoding}: $(s^{(1)}, s^{(2)}, s^{(3)}) = (\langle a_{22}\rangle,\langle b_{4}\rangle,\langle c_{118}\rangle)$.
    \item \emph{User example}: \texttt{user\_id=A9R\ldots{}LN}, \#interactions$=6$.
    \item \emph{Interaction example (test record)}: \texttt{user\_id=A9R\ldots{}LN}, \texttt{item\_sid}=\texttt{<a\_22><b\_4><c\_118>}, \texttt{history\_item\_sid}=\texttt{[<a\_22><b\_4><c\_31>, <a\_13><b\_198><c\_9>, <a\_22><b\_55><c\_240>]}, \texttt{timestamp}=\texttt{2018-04-15}.
\end{itemize}

The history and target are both expressed as SID tuples so that the entire retrieval/ranking objective reduces to next-item SID prediction; the item text fields are only used during SID tokenization and for human inspection, never shown to the LLM.

\subsection{Prompt Template and Instantiated Example}
\label{app:data-examples-prompts}

The LLM receives a two-message chat prompt: a system instruction that frames the task and a user message containing the SID-valued interaction history.
Figure~\ref{fig:prompt-template} shows the template with placeholders, and Figure~\ref{fig:prompt-example} instantiates it with a concrete \texttt{Office-Products} record together with a representative SAPO rollout following the implementation layout of Eq.~\eqref{eq:response-impl}: $K{=}3$ contiguous thinking blocks, one per codebook level, followed by the $K{=}3$ SID tokens under trie-based constrained decoding.

\begin{figure}[t]
\centering
\fbox{\begin{minipage}{0.94\linewidth}
\small
\textbf{Prompt Template (Stage 3 RL rollout).}\\[2pt]
\textbf{System:} \textit{Below is an instruction that describes a task, paired with an input that provides further context. Write a response that appropriately completes the request. Can you recommend the next item for the user based on their interaction history?}\\[2pt]
\textbf{User:} The user has sequentially interacted with items $\langle$\texttt{history\_sid\_list}$\rangle$. Can you recommend the next item for him? Let's think step by step before making recommendation. Directly output the item SID after thinking.\\[4pt]
\textbf{Expected assistant response:}\\
\texttt{<think>} $\langle$\texttt{reasoning trace 1}$\rangle$ \texttt{</think>} \texttt{<think>} $\langle$\texttt{reasoning trace 2}$\rangle$ \texttt{</think>} \texttt{<think>} $\langle$\texttt{reasoning trace 3}$\rangle$ \texttt{</think>} $\langle$\texttt{SID level 1}$\rangle$$\langle$\texttt{SID level 2}$\rangle$$\langle$\texttt{SID level 3}$\rangle$
\end{minipage}}
\caption{Prompt template used for Stage~3 RL training. Placeholders in $\langle\cdot\rangle$ are filled with per-example content: $\langle$\texttt{history\_sid\_list}$\rangle$ is a comma-separated list of SID tuples drawn from the user's interaction history, and the expected response is $K{=}3$ \texttt{<think>}$\ldots$\texttt{</think>} thinking blocks (one per codebook level, matching the decoding layout of Eq.~\eqref{eq:response-impl}) followed by the $K{=}3$ SID tokens for the predicted next item. Constrained decoding over the learned SID trie guarantees that every sampled SID is a valid item under the per-category codebook.}
\label{fig:prompt-template}
\end{figure}

\begin{figure}[t]
\centering
\fbox{\begin{minipage}{0.94\linewidth}
\small
\textbf{Prompt Example (\texttt{Office-Products}).}\\[2pt]
\textbf{System:} \textit{Below is an instruction that describes a task, paired with an input that provides further context. Write a response that appropriately completes the request. Can you recommend the next item for the user based on their interaction history?}\\[2pt]
\textbf{User:} The user has sequentially interacted with items \texttt{<a\_3><b\_12><c\_201>, <a\_17><b\_88><c\_14>, <a\_5><b\_220><c\_77>}. Can you recommend the next item for him? Let's think step by step before making recommendation. Directly output the item SID after thinking.\\[4pt]
\textbf{Ground-truth next item:} \texttt{<a\_17><b\_93><c\_41>} (Sanford Uni-Ball Vision Needle Micro Rollerball Pen).\\[2pt]
\textbf{Example SAPO rollout (full-match case):}\\
\texttt{<think>} The user's recent purchases cluster around general office writing supplies, with a strong signal from \texttt{<a\_17>} (an office-supplies coarse code that appears in the second history item). The fine-grained codes span pens and similar stationery, so the next item likely sits under the same level-1 cluster. \texttt{</think>} \texttt{<think>} Narrowing to level~2, the user has interacted with sub-code \texttt{<b\_88>} (ballpoint writing); a nearby sub-code around \texttt{<b\_93>} covers rollerball/gel-ink pens, which is a natural next purchase given the existing ballpoint history. \texttt{</think>} \texttt{<think>} Within that level-2 cluster, level-3 variants around \texttt{<c\_14>} to \texttt{<c\_45>} correspond to fine-point rollerball variants; \texttt{<c\_41>} specifically matches a smooth ultra-fine needle-tip rollerball, consistent with the item description style of the user's prior purchases. \texttt{</think>} \texttt{<a\_17><b\_93><c\_41>}
\end{minipage}}
\caption{A concrete prompt instance from \texttt{Office-Products}. The user's history is represented entirely as SID tuples, and the rollout generates $K{=}3$ \texttt{<think>}$\ldots$\texttt{</think>} thinking blocks (one per codebook level, matching the decoding layout of Eq.~\eqref{eq:response-impl}) followed by the $K{=}3$ SID tokens. In this example the rollout recovers the ground-truth target exactly (all three levels correct); see Section~\ref{sec:case-study} and Appendix~\ref{app:case-study} for side-by-side comparisons with outcome-reward rollouts under the same prompt. The per-step match reward of Eq.~\eqref{eq:step-reward} is non-zero at every codebook level for this rollout (all three per-step match indicators fire), which is exactly the per-step match signal SAPO exploits.}
\label{fig:prompt-example}
\end{figure}

\section{Implementation Details}
\label{app:implementation}

\textbf{Default backbone.}
We use Qwen3-1.7B~\citep{yang2025qwen3} as the primary backbone and share the Stage~1 and Stage~2 checkpoints across all RL variants so that every comparison isolates the effect of the Stage~3 algorithm.
Stages~1 and~2 are trained with AdamW, a cosine schedule, and a $5\%$ warm-up on the enriched SFT and reasoning-activation datasets described in Appendix~\ref{app:training-details}.
In Stage~3 we build on the VERL~\citep{sheng2025hybridflow} training framework with vLLM-based rollouts.
For each prompt we sample $G\!=\!16$ rollouts with temperature and top-$p$ both set to $1.0$, a maximum response length of $1024$ tokens, and a mini-batch of $256$ prompts per update.
The actor learning rate is $1\!\times\!10^{-6}$ with a $2\%$ warm-up, the KL coefficient against the Stage~2 reference is set to $10^{-3}$ (low-variance KL), the entropy coefficient is $0$, and the clip range $\epsilon$ is $0.2$ for both SAPO and the outcome-reward baselines.
The released SAPO implementation assigns the per-step match reward of Eq.~\eqref{eq:step-reward} at SID-token positions, attaches the format bonus only to the last SID position, and requires exactly $K$ extracted thinking blocks before any per-step reward is credited.

% \textbf{Additional backbones.}
% To assess the generalizability of SAPO across model scales and architectures, we additionally run the full three-stage pipeline with Qwen3-4B and Llama-3.2-3B.
% For each backbone we retrain Stages~1--3 from scratch using the same data splits and hyperparameter search space (Appendix~\ref{app:training-details}); only the learning rate and batch size are re-tuned per model.

\textbf{Reproducibility.}
All experiments run on $4$ NVIDIA GPUs.
% Each configuration is run with $3$ random seeds and we report the mean and standard deviation.
Full hyperparameters, and compute budgets are provided in Appendix~\ref{app:training-details}.

\section{Training Details}
\label{app:training-details}

Stages~1--2 use the HuggingFace \texttt{Trainer}, and Stage~3 builds on VERL~\citep{sheng2025hybridflow} with vLLM-based rollouts.
% We fix seed~$42$ globally.
Table~\ref{tab:hparams} summarizes the key hyperparameters per stage for the default Qwen3-1.7B backbone. 
% Table~\ref{tab:per-backbone} lists the re-tuned values used for the additional Qwen3-4B and Llama-3.2-3B backbones.

\begin{table}[h]
\centering
\caption{Hyperparameters per training stage for the Qwen3-1.7B backbone. All stages use AdamW in BF16 with $4$ GPUs. Early stopping on validation loss (patience~$3$) is active in Stages~1--2. Stage~3 follows the group-relative surrogate of Eq.~\eqref{eq:step-aligned-obj} with the rollout parameters shown.}
\label{tab:hparams}
\footnotesize
\setlength{\tabcolsep}{3pt}
\begin{tabular}{@{}l l l l@{}}
\toprule
\textbf{Hyperparameter} & \textbf{Stage~1 (SFT)} & \textbf{Stage~2 (reasoning)} & \textbf{Stage~3 (RL)} \\
\midrule
Initial checkpoint        & Qwen3-1.7B~\citep{yang2025qwen3} & Stage~1 final  & Stage~2 final \\
Optimizer                  & AdamW            & AdamW             & AdamW           \\
Learning rate              & $3\!\times\!10^{-4}$ & $1\!\times\!10^{-5}$ & $1\!\times\!10^{-6}$ \\
LR schedule                & cosine           & cosine            & cosine          \\
Warm-up                    & $20$ steps       & $10$ steps        & $2\%$ of total  \\
Weight decay               & $0.0$            & $0.0$             & $0.0$           \\
Global batch size          & $1024$ samples   & $1024$ samples    & $256$ prompts   \\
Per-GPU micro-batch        & $1$              & $8$               & $8$             \\
Max epochs                 & $10$             & $5$               & $10$            \\
Max sequence length        & $1024$ tokens    & $1024$ tokens     & $1024{+}1024$   \\
Precision                  & BF16             & BF16              & FP32            \\
\midrule
\multicolumn{4}{@{}l}{\textit{Stage~3 RL-specific.}}\\
Rollouts per prompt $G$    & \multicolumn{3}{l}{$16$ (vLLM, temperature $1.0$, top-$p\!=\!1.0$, GPU-mem util.\ $0.8$)} \\
KL coefficient $\lambda$   & \multicolumn{3}{l}{$10^{-3}$ (low-variance KL to the Stage~2 reference)} \\
Entropy coefficient        & \multicolumn{3}{l}{$0$} \\
Clip range $\epsilon$      & \multicolumn{3}{l}{$0.2$ (SAPO and outcome-reward baselines)} \\
Reward coefficients        & \multicolumn{3}{l}{$\alpha\!=\!1.0$ (match), $\beta\!=\!0.2$ (format)} \\
Decoding constraint        & \multicolumn{3}{l}{trie-based constrained decoding over the $K$-level SID vocabulary} \\
\bottomrule
\end{tabular}
\end{table}

% \begin{table}[h]
% \centering
% \caption{Backbone-specific overrides for Qwen3-4B and Llama-3.2-3B. Values not listed match Table~\ref{tab:hparams}. The final tuned values for the larger backbones will be filled in alongside the future RQ5 results; learning rates are searched over $\{5\!\times\!10^{-7}, 1\!\times\!10^{-6}, 2\!\times\!10^{-6}\}$ for Stage~3 and scaled commensurately in Stages~1--2, with per-device batch sizes halved (and accumulation doubled) to keep within the same $4\!\times$ GPU budget.}
% \label{tab:per-backbone}
% \small
% \begin{tabular}{@{}l ccc@{}}
% \toprule
% \textbf{Setting} & \textbf{Qwen3-1.7B (default)} & \textbf{Qwen3-4B} & \textbf{Llama-3.2-3B} \\
% \midrule
% Stage~1 LR                   & $3\!\times\!10^{-4}$ & -- & -- \\
% Stage~2 LR                   & $1\!\times\!10^{-5}$ & -- & -- \\
% Stage~3 actor LR             & $1\!\times\!10^{-6}$ & -- & -- \\
% Stage~1--2 global batch      & $1024$               & -- & -- \\
% Stage~3 global batch         & $256$                & -- & -- \\
% Stage~3 rollouts per prompt  & $16$                 & $16$ & $16$ \\
% \bottomrule
% \end{tabular}
% \end{table}

\noindent\textbf{Compute budget.}
On $4\!\times\!$NVIDIA~H100 (80\,GB) GPUs, end-to-end training of the default Qwen3-1.7B pipeline takes approximately $20$~hours for Stage~1, $1$~hours for Stage~2, and $48$~hours for Stage~3 per category (wall-clock).
Stage~3 dominates total cost and scales roughly linearly with the number of rollouts per prompt.
% Timings for the Qwen3-4B and Llama-3.2-3B backbones will be reported together with the corresponding future RQ5 recommendation metrics.

% \noindent\textbf{Reproduction commands.}
% The three stages correspond to launcher scripts \texttt{sft\_Qwen3\_enrich.sh}, \texttt{sft\_reasoning\_activation\_step\_aligned.sh}, and \texttt{RL\_training\_step\_aligned.sh} in the released code; each takes a single argument selecting one of the three Amazon categories (\texttt{Office\_Products}, \texttt{Video\_Games}, \texttt{Industrial\_and\_Scientific}).
% Outcome-reward baselines are obtained by substituting \texttt{RL\_training\_script.sh} (advantage estimator \texttt{grpo}, reward manager \texttt{naive}) in place of the step-aligned Stage~3 script while keeping Stages~1--2 identical.

\section{Training Efficiency}
\label{app:efficiency}

We report the computational overhead of SAPO relative to outcome-reward training in Table~\ref{tab:efficiency}.
The per-step reward computation and advantage normalization add minimal overhead because they operate on the $K{=}3$ SID positions per rollout rather than on all tokens.
We measure wall-clock time over a five-step profiling run and report averages over steps 2--5 to exclude initialization and first-step warm-up effects.

\begin{table}[h]
\centering
\caption{Training efficiency comparison on \texttt{Office-Products} (Qwen3-1.7B, 4$\times$ GPU). Peak memory denotes maximum allocated GPU memory.}
\label{tab:efficiency}
\begin{tabular}{@{}l ccc@{}}
\toprule
\textbf{Method} & Wall-clock / step (s) & Peak GPU mem.\ (GB) & Reward+adv.\ / step (s) \\
\midrule
Outcome & 85.15 & 74.83 & 0.59 \\
SAPO    & 85.03 & 74.89 & 1.13 \\
\bottomrule
\end{tabular}
\end{table}

SAPO introduces no measurable wall-clock slowdown in this short profiling run: its average step time is within $0.15\%$ of the outcome-reward variant.
The peak allocated GPU memory is also essentially unchanged (74.89\,GB vs.\ 74.83\,GB).
The reward and advantage computation time increases from 0.59\,s to 1.13\,s per step, but the additional 0.54\,s accounts for only $0.63\%$ of a full update step, so it is dominated by rollout generation and actor updates.
We therefore report the directly measured reward/advantage overhead rather than steps to convergence, since convergence in stochastic GRPO training is not a well-defined quantity for a short profiling run.

\begin{figure}[h]
    \centering
    \includegraphics[width=\linewidth]{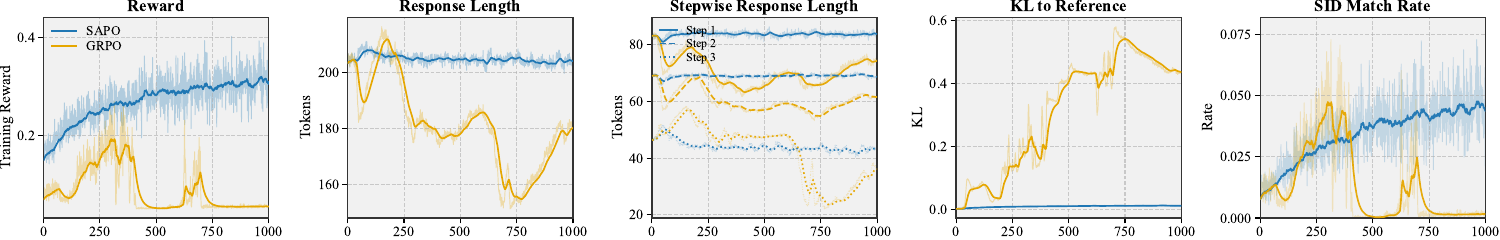}
    \caption{RL training dynamics on the \texttt{Video-Games} dataset.
    This figure follows the same diagnostic layout as Figure~\ref{fig:diagnostic}, comparing SAPO and GRPO from the same Stage~2 checkpoint using training reward, response length, stepwise response length, KL to the reference, and SID match rate; blue denotes SAPO and orange denotes GRPO.}
    \label{fig:diagnostic-video-games}
    \vspace{-3mm}
\end{figure}

\begin{figure}[h]
    \centering
    \includegraphics[width=\linewidth]{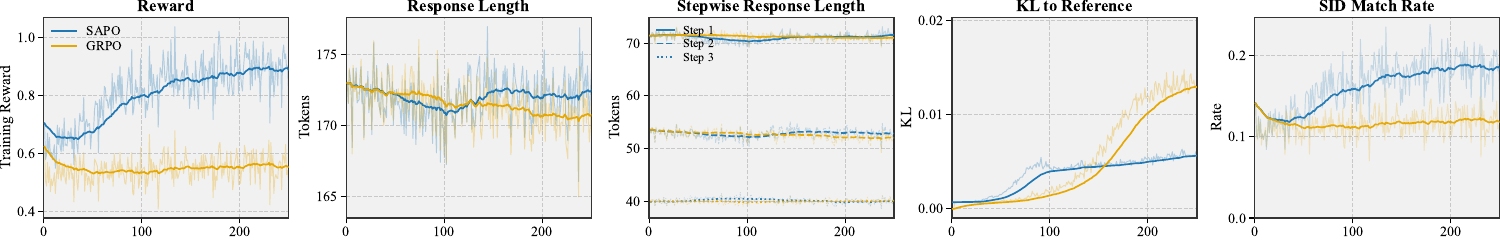}
    \caption{RL training dynamics on the \texttt{Office-Products} dataset.
    }
    \label{fig:diagnostic-office-products}
    \vspace{-3mm}
\end{figure}

\section{Extended Training Dynamics}
\label{app:diagnostic-extra}

Figures~\ref{fig:diagnostic-video-games} and~\ref{fig:diagnostic-office-products} extend the main-text diagnostic from \texttt{Industrial-and-Scientific} to the remaining two categories.
On \texttt{Video-Games}, GRPO exhibits a stronger instability pattern, with reward collapse or oscillation, response-length drift, rapid KL growth, and an unstable SID match rate, whereas SAPO keeps reward and SID-match dynamics more controlled.
On \texttt{Office-Products}, the difference is milder but follows the same direction: SAPO improves reward and SID match rate more steadily, keeps response length comparable, and shows slower KL growth than GRPO.
Together with Figure~\ref{fig:diagnostic}, these per-category diagnostics are consistent with the RQ2 conclusion that reasoning-step credit assignment stabilizes sparse exact-match RL across datasets rather than only in the representative main-text run.

% \section{Extended Ablation Results}
% \label{app:ablation-extra}
% TODO. Backbone study ($3{\times}2$ grid) and
% component study results on \texttt{Video-Games} and \texttt{Industrial-and-Scientific}.
% Same format as Table~\ref{tab:ablation} in the main text.

\section{Extended Case Studies}
\label{app:case-study}
\newcommand{\appcasesidraw}[1]{\textcolor{red!70!black}{\texttt{#1}}}

This appendix provides additional qualitative examples complementing
Figure~\ref{fig:case-study}.  Each example follows the same layout as
Figure~\ref{fig:case-study}: a shared history and target appear first, followed
by side-by-side outcome-GRPO and SAPO reasoning traces.  SID tokens are shown in
red.  The thinking blocks below are copied from the raw model outputs; malformed
outcome-GRPO outputs are shown as raw malformed thinking blocks rather than
rewritten summaries.

\noindent\textbf{Malformed Reasoning and Lost Series Signal (\texttt{Video-Games}).}
\begin{tcolorbox}[enhanced, width=\linewidth, colback=gray!4, colframe=gray!55, boxrule=0.4pt, arc=1mm, left=3pt, right=3pt, top=3pt, bottom=3pt, fontupper=\scriptsize]
\textbf{Shared prompt history.}
\appcasesidraw{<a\_250><b\_165><c\_76>} (Super Mario World);
\appcasesidraw{<a\_219><b\_191><c\_170>} (SNES-compatible console);
\appcasesidraw{<a\_141><b\_253><c\_131>} (Star Wars: Rogue Squadron);
\appcasesidraw{<a\_250><b\_14><c\_196>} (Super Mario 64);
\appcasesidraw{<a\_205><b\_82><c\_228>} (Cruis'n USA);
\appcasesidraw{<a\_175><b\_113><c\_236>} (Donkey Kong 64).\\[-1pt]
\textbf{Target.}
\appcasesidraw{<a\_250><b\_183><c\_190>}: Yoshi's Story.
\end{tcolorbox}
\vspace{2pt}
\begin{minipage}[t]{0.53\linewidth}
\begin{tcolorbox}[enhanced, width=\linewidth, height=5.3cm, valign=top, colback=red!3, colframe=red!45!black, colbacktitle=red!12, coltitle=black, title=\textbf{Outcome GRPO}, fonttitle=\scriptsize, fontupper=\tiny, boxrule=0.4pt, arc=1mm, left=3pt, right=3pt, top=3pt, bottom=3pt]
\textbf{Thinking block 1.}
16-bit Entertainment System - Playstation 2, identified as \appcasesidraw{<a\_219><b\_191><c\_170>}, is a classic video game console that revolutionized the gaming industry with its extensive library of games and multimedia capabilities. This console is particularly appealing to users who appreciate retro gaming and nostalgia for the iconic PlayStation 2 era.\\[1pt]
\textbf{Thinking block 2.}
16-bit Entertainment System - Playstation 2, identified as \appcasesidraw{<a\_219><b\_191><c\_170>}, is a classic video game console that revolutionized the gaming industry with its extensive library of games and multimedia capabilities. This console is particularly appealing to users who appreciate retro gaming and nostalgia for the iconic PlayStation 2 era.\\[1pt]
\textbf{Thinking block 3.}
28-in-1 Blue Game Card Case Holder Cartridge Box for Nintendo 3DS Video Games by NYKKOLA, identified as \appcasesidraw{<a\_219><b\_116><c\_229>}, is a practical accessory for gamers who want to keep their game cards organized and easily accessible. This product is particularly appealing to users who own multiple Nintendo 3DS games and are looking for a way to store and manage their game cartridges efficiently.\\[1pt]
\textbf{Prediction.}
\appcasesidraw{<a\_219><b\_116><c\_229>}: 28-in-1 Nintendo 3DS game-card case.
\textit{\textbf{Wrong; the target is not in top-10.}}
\end{tcolorbox}
\end{minipage}\hfill
\begin{minipage}[t]{0.46\linewidth}
\begin{tcolorbox}[enhanced, width=\linewidth, height=5.3cm, valign=top, colback=blue!3, colframe=blue!50!black, colbacktitle=blue!12, coltitle=black, title=\textbf{SAPO}, fonttitle=\scriptsize, fontupper=\tiny, boxrule=0.4pt, arc=1mm, left=3pt, right=3pt, top=3pt, bottom=3pt]
\textbf{Thinking block 1.}
Analyzing the user's interaction history, I notice a strong inclination towards classic Nintendo games, particularly those that feature platforming and adventure elements. The user has engaged with multiple titles from the Super Mario franchise, such as \appcasesidraw{<a\_250><b\_165><c\_76>} and \appcasesidraw{<a\_250><b\_14><c\_196>}, indicating a preference for nostalgic gameplay experiences that emphasize exploration and character-driven narratives.\\[1pt]
\textbf{Thinking block 2.}
Additionally, the user has shown interest in action-adventure games like \appcasesidraw{<a\_141><b\_253><c\_131>}, which suggests a broader appreciation for immersive storytelling and dynamic gameplay mechanics. The presence of \appcasesidraw{<a\_205><b\_82><c\_228>} further indicates a willingness to explore different genres, yet the predominant focus remains on Nintendo's iconic titles.\\[1pt]
\textbf{Thinking block 3.}
Given this pattern, it's reasonable to predict that the user would be drawn to another classic platforming experience that aligns with their established interests. So the predicted next item is: \appcasesidraw{<a\_250><b\_183><c\_190>}\\[1pt]
\textbf{Prediction.}
\appcasesidraw{<a\_250><b\_183><c\_190>}: Yoshi's Story.
\textit{\textbf{Correct.}}
\end{tcolorbox}
\end{minipage}

\noindent\textit{Takeaway.} Outcome-reward training produces malformed/repetitive blocks, whereas SAPO preserves the game-series signal.

\noindent\textbf{Final-SID Refinement (\texttt{Industrial-and-Scientific}).}

\begin{tcolorbox}[enhanced, width=\linewidth, colback=gray!4, colframe=gray!55, boxrule=0.4pt, arc=1mm, left=3pt, right=3pt, top=3pt, bottom=3pt, fontupper=\scriptsize]
\textbf{Shared prompt history.}
\appcasesidraw{<a\_79><b\_24><c\_40>} (thermal paste);
\appcasesidraw{<a\_36><b\_103><c\_152>} (digital audio converter);
\appcasesidraw{<a\_15><b\_10><c\_53>} (Industrial \& Scientific category);
\appcasesidraw{<a\_94><b\_242><c\_43>} (1000mL filtering flask).\\[-1pt]
\textbf{Target.}
\appcasesidraw{<a\_94><b\_93><c\_57>}: Buchner-style filter funnel.
\end{tcolorbox}
\vspace{2pt}
\begin{minipage}[t]{0.55\linewidth}
\begin{tcolorbox}[enhanced, width=\linewidth, height=5.0cm, valign=top, colback=red!3, colframe=red!45!black, colbacktitle=red!12, coltitle=black, title=\textbf{Outcome GRPO}, fonttitle=\scriptsize, fontupper=\tiny, boxrule=0.4pt, arc=1mm, left=3pt, right=3pt, top=3pt, bottom=3pt]
\textbf{Thinking block 1.}
I notice the user's initial interest in \appcasesidraw{<a\_79><b\_24><c\_40>} suggests a focus on technical components, possibly for computing or electronics, indicating a preference for precision and performance. The subsequent engagement with \appcasesidraw{<a\_36><b\_103><c\_152>} shows an openness to practical, utility-driven devices that facilitate home or workspace improvements. Moving forward, the user's selection of \appcasesidraw{<a\_15><b\_10><c\_53>} and \appcasesidraw{<a\_94><b\_242><c\_43>} points to a growing interest in laboratory equipment.\\[1pt]
\textbf{Thinking block 2.}
I also observe a tendency toward items that offer versatility and compatibility with various setups, as seen in the choice of \appcasesidraw{<a\_36><b\_103><c\_152>}. The user may enjoy exploring further laboratory accessories, specialized glassware, or other scientific tools that support hands-on experimentation.\\[1pt]
\textbf{Thinking block 3.}
Overall, the user appears motivated by products that enable precise work, facilitate experimentation, and improve their workspace or laboratory setup. I will recommend the next item as \appcasesidraw{<a\_94><b\_93><c\_171>}, which aligns with their demonstrated interest in reliable, versatile laboratory equipment.\\[1pt]
\textbf{Prediction.}
\appcasesidraw{<a\_94><b\_93><c\_171>}: generic glass funnel.
\textit{\textbf{Wrong final SID step.}}
\end{tcolorbox}
\end{minipage}\hfill
\begin{minipage}[t]{0.44\linewidth}
\begin{tcolorbox}[enhanced, width=\linewidth, height=5.0cm, valign=top, colback=blue!3, colframe=blue!50!black, colbacktitle=blue!12, coltitle=black, title=\textbf{SAPO}, fonttitle=\scriptsize, fontupper=\tiny, boxrule=0.4pt, arc=1mm, left=3pt, right=3pt, top=3pt, bottom=3pt]
\textbf{Thinking block 1.}
I observe that the user has consistently engaged with items related to laboratory and technical equipment, such as \appcasesidraw{<a\_79><b\_24><c\_40>}, \appcasesidraw{<a\_36><b\_103><c\_152>}, \appcasesidraw{<a\_15><b\_10><c\_53>}, and \appcasesidraw{<a\_94><b\_242><c\_43>}. These selections suggest a strong interest in scientific apparatus, particularly those supporting experimental setups, precise measurements, and chemical handling.\\[1pt]
\textbf{Thinking block 2.}
The user appears to value durability, compatibility with lab environments, and items that facilitate hands-on scientific work. Their choices indicate a preference for tools and accessories that enhance laboratory efficiency and reliability.\\[1pt]
\textbf{Thinking block 3.}
Given this pattern, the user may enjoy further exploring laboratory glassware, measurement instruments, or other specialized equipment that supports scientific experimentation and technical tasks.\\[1pt]
\textbf{Prediction.}
\appcasesidraw{<a\_94><b\_93><c\_57>}: Buchner-style filter funnel.
\textit{\textbf{Correct.}}
\end{tcolorbox}
\end{minipage}

\noindent\textit{Takeaway.} Both methods keep the labware prefix, but SAPO corrects the final SID step using the filtering-flask cue.

\noindent\textbf{Over-Continuity under a Sharp Product-Family Shift (\texttt{Office-Products}).}
% \noindent\textbf{\texttt{Office-Products}.}
\begin{tcolorbox}[enhanced, width=\linewidth, colback=gray!4, colframe=gray!55, boxrule=0.4pt, arc=1mm, left=3pt, right=3pt, top=3pt, bottom=3pt, fontupper=\scriptsize]
\textbf{Shared prompt history.}
\appcasesidraw{<a\_102><b\_62><c\_137>} (Sharpie yellow highlighters);
\appcasesidraw{<a\_102><b\_10><c\_219>} (Sharpie orange highlighters).\\[-1pt]
\textbf{Target.}\\[-1pt] \appcasesidraw{<a\_187><b\_147><c\_193>}\\[-1pt] (Pendaflex file folders).
\end{tcolorbox}
\vspace{2pt}
\begin{minipage}[t]{0.47\linewidth}
\begin{tcolorbox}[enhanced, width=\linewidth, height=4cm, valign=top, colback=red!3, colframe=red!45!black, colbacktitle=red!12, coltitle=black, title=\textbf{Outcome GRPO}, fonttitle=\scriptsize, fontupper=\tiny, boxrule=0.4pt, arc=1mm, left=3pt, right=3pt, top=3pt, bottom=3pt]
\textbf{Thinking block 1.}
I notice that the user has interacted with two items, both of which are high-quantity, functional office supplies designed for marking and highlighting text. Both items share the attribute \appcasesidraw{<a\_102>}, indicating a strong preference for a particular brand or product line.\\[1pt]
\textbf{Thinking block 2.}
The user's choices suggest a focus on practicality and efficiency, with an emphasis on tools that facilitate organization and document management. The repeated selection of items with attributes \appcasesidraw{<b\_62>} and \appcasesidraw{<b\_10>} points to a tendency toward products that enhance productivity and support workflow.\\[1pt]
\textbf{Thinking block 3.}
Given this pattern, the user may enjoy other office supplies that offer organizational benefits, such as labeling, sorting, or document management solutions.\\[1pt]
\textbf{Prediction.}
\appcasesidraw{<a\_187><b\_147><c\_193>}: Pendaflex file folders.
\textit{\textbf{Correct.}}
\end{tcolorbox}
\end{minipage}\hfill
\begin{minipage}[t]{0.52\linewidth}
\begin{tcolorbox}[enhanced, width=\linewidth, height=4cm, valign=top, colback=blue!3, colframe=blue!50!black, colbacktitle=blue!12, coltitle=black, title=\textbf{SAPO}, fonttitle=\scriptsize, fontupper=\tiny, boxrule=0.4pt, arc=1mm, left=3pt, right=3pt, top=3pt, bottom=3pt]
\textbf{Thinking block 1.}
I notice the user has engaged with two items, both featuring the \appcasesidraw{<a\_102>} attribute, which points to a consistent preference for a particular brand or product line. The first interaction involved \appcasesidraw{<a\_102><b\_62><c\_137>}, a highlighter with a chisel tip, while the second was \appcasesidraw{<a\_102><b\_10><c\_219>}, also a highlighter but with a different color and a larger pack size.\\[1pt]
\textbf{Thinking block 2.}
Both items emphasize practicality--bright color, smear-resistant ink, and versatility for highlighting text. The user appears motivated by utility and efficiency, possibly for tasks like studying, organizing, or office work.\\[1pt]
\textbf{Thinking block 3.}
Given this pattern, the user may enjoy other stationery items that offer similar functional benefits, such as multi-pack highlighters, markers, or organizational tools with ergonomic features or enhanced usability.\\[1pt]
\textbf{Prediction.}
\appcasesidraw{<a\_102><b\_220><c\_138>}: Sharpie fluorescent pink highlighters.
\textit{\textbf{Wrong.}}
\end{tcolorbox}
\end{minipage}

\noindent\textit{Takeaway.} SAPO does not eliminate all errors: when the next item breaks sharply from the observed family, it can overuse local continuity.

\noindent\textbf{Recovering a Recency-Sensitive Product-Family Cue (\texttt{Office-Products}).}
\begin{tcolorbox}[enhanced, width=\linewidth, colback=gray!4, colframe=gray!55, boxrule=0.4pt, arc=1mm, left=3pt, right=3pt, top=3pt, bottom=3pt, fontupper=\scriptsize]
\textbf{Shared prompt history.}
\appcasesidraw{<a\_41><b\_50><c\_26>} (Brother ink);
\appcasesidraw{<a\_41><b\_41><c\_111>} (Canon toner);
\appcasesidraw{<a\_170><b\_39><c\_252>} (Avery foam stamp pad);
\appcasesidraw{<a\_18><b\_81><c\_41>} (Avery 3-ring binder);
\appcasesidraw{<a\_130><b\_228><c\_138>} (Staples invisible tape);
\appcasesidraw{<a\_183><b\_150><c\_4>} and \appcasesidraw{<a\_183><b\_150><c\_127>} (LINKYO Canon toner);
\appcasesidraw{<a\_255><b\_117><c\_30>} (BIC black ballpoint pens);
\appcasesidraw{<a\_31><b\_53><c\_79>} (ACCO mini binder clips);
\appcasesidraw{<a\_255><b\_117><c\_124>} (BIC blue ballpoint pens).\\[-1pt]
\textbf{Target.}
\appcasesidraw{<a\_255><b\_117><c\_102>}: BIC Round Stic red ballpoint pens.
\end{tcolorbox}
\vspace{2pt}
\begin{minipage}[t]{0.49\linewidth}
\begin{tcolorbox}[enhanced, width=\linewidth, colback=red!3, colframe=red!45!black, colbacktitle=red!12, coltitle=black, title=\textbf{Outcome GRPO}, fonttitle=\scriptsize, fontupper=\tiny, boxrule=0.4pt, arc=1mm, left=3pt, right=3pt, top=3pt, bottom=3pt]
\textbf{Thinking block 1.}
Examining the user's interaction history, I notice a strong focus on essential office supplies, particularly those related to printing and document management. The repeated selection of items like \appcasesidraw{<a\_41><b\_50><c\_26>}, \appcasesidraw{<a\_41><b\_41><c\_111>}, \appcasesidraw{<a\_183><b\_150><c\_4>}, and \appcasesidraw{<a\_183><b\_150><c\_127>} suggests a preference for reliable, high-yield consumables, likely to support frequent printing tasks.\\[1pt]
\textbf{Thinking block 2.}
The inclusion of \appcasesidraw{<a\_170><b\_39><c\_252>} and \appcasesidraw{<a\_130><b\_228><c\_138>} points to an interest in versatile, everyday office tools, while \appcasesidraw{<a\_18><b\_81><c\_41>} and \appcasesidraw{<a\_31><b\_53><c\_79>} indicate a need for organization and document handling. The user also shows a tendency to purchase in bulk, as seen with \appcasesidraw{<a\_255><b\_117><c\_30>} and \appcasesidraw{<a\_255><b\_117><c\_124>}, which may reflect a preference for value and long-term supply.\\[1pt]
\textbf{Thinking block 3.}
Given these patterns, the user may enjoy exploring additional high-capacity, cost-effective printing supplies, organizational accessories, or bulk office consumables that support efficient document processing and management.\\[1pt]
\textbf{Prediction.}
\appcasesidraw{<a\_183><b\_150><c\_15>}: LINKYO HP toner cartridge.
\textit{\textbf{Wrong; the target is not in top-10.}}
\end{tcolorbox}
\end{minipage}\hfill
\begin{minipage}[t]{0.49\linewidth}
\begin{tcolorbox}[enhanced, width=\linewidth, colback=blue!3, colframe=blue!50!black, colbacktitle=blue!12, coltitle=black, title=\textbf{SAPO}, fonttitle=\scriptsize, fontupper=\tiny, boxrule=0.4pt, arc=1mm, left=3pt, right=3pt, top=3pt, bottom=3pt]
\textbf{Thinking block 1.}
Examining the user's interaction history, I notice a strong focus on essential office and productivity supplies, particularly those related to printing and document management. The repeated selection of items like \appcasesidraw{<a\_41><b\_50><c\_26>}, \appcasesidraw{<a\_41><b\_41><c\_111>}, \appcasesidraw{<a\_183><b\_150><c\_4>}, and \appcasesidraw{<a\_183><b\_150><c\_127>} suggests a preference for reliable, high-yield consumables, likely to support frequent printing needs.\\[1pt]
\textbf{Thinking block 2.}
The inclusion of \appcasesidraw{<a\_170><b\_39><c\_252>} and \appcasesidraw{<a\_130><b\_228><c\_138>} points to an interest in versatile, everyday office tools, while \appcasesidraw{<a\_18><b\_81><c\_41>} and \appcasesidraw{<a\_31><b\_53><c\_79>} indicate a practical approach to organizing and managing paperwork. The user also shows a tendency to purchase in bulk, as seen with \appcasesidraw{<a\_255><b\_117><c\_30>} and \appcasesidraw{<a\_255><b\_117><c\_124>}, which may reflect a need for cost-effective, high-volume writing instruments.\\[1pt]
\textbf{Thinking block 3.}
Overall, this user tends to prefer functional, value-oriented supplies that support efficient document handling and organization, and may enjoy exploring additional bulk packs, multi-color options, or other practical office essentials.\\[1pt]
\textbf{Prediction.}
\appcasesidraw{<a\_255><b\_117><c\_102>}: BIC Round Stic red ballpoint pens.
\textit{\textbf{Correct.}}
\end{tcolorbox}
\end{minipage}

\noindent\textit{Takeaway.} SAPO preserves a recency-sensitive product-family cue that outcome GRPO washes out under a broader office-supplies summary.